\documentclass[a4paper, 11pt]{article}
\usepackage{fullpage}

\usepackage{hyperref} 
\usepackage{xcolor}
\hypersetup{
    colorlinks,
    linkcolor={red!50!black},
    citecolor={blue!50!black},
    urlcolor={blue!80!black}
}

\usepackage{mathtools} 
\usepackage{soul} 
\usepackage{bm} 
\usepackage{amsfonts} 
\usepackage[T1]{fontenc} 
\usepackage{authblk} 

\usepackage{natbib}
\bibliographystyle{abbrvnat}
\setcitestyle{authoryear,open=(,close=)} 

\usepackage{amsthm} 
\newtheorem{theorem}{Theorem}
\newtheorem{definition}{Definition}

\newtheorem{example}{Example}

\usepackage{tikz}
\usetikzlibrary{arrows.meta}
\usetikzlibrary{calc}
\usetikzlibrary{fit}
\usetikzlibrary{positioning}
\usetikzlibrary{external}
\tikzexternalize[prefix=external/] 

\renewcommand{\L}{\mathcal{L}}
\renewcommand{\H}{\mathcal{H}}
\newcommand{\M}{\mathcal{M}}
\newcommand{\Mi}{{\mathcal{M}_i}}

\newcommand{\Ltuple}{(\bm{X},\allowbreak\bm{E},\allowbreak\bm{F},\allowbreak\Pr_{\bm{E}})}
\newcommand{\Htuple}{(\bm{Y},\allowbreak\bm{U},\allowbreak\bm{G},\allowbreak\Pr_{\bm{U}})}
\newcommand{\Mtuple}{(\bm{X},\allowbreak\bm{E},\allowbreak\bm{F},\allowbreak\Pr_{\bm{E}})}

\newcommand{\Mituple}{(\bm{X},\allowbreak\bm{E},\allowbreak\bm{F}^i,\allowbreak\Pr_{\bm{E}})}

\newcommand{\I}{\mathcal{I}}
\newcommand{\J}{\mathcal{J}}


\renewcommand{\Pr}{\mathbb{P}}
\newcommand{\emp}{\varepsilon}

\DeclareMathOperator{\even}{Even}
\DeclareMathOperator{\proj}{Proj}
\DeclareMathOperator{\val}{Val}
\DeclareMathOperator{\img}{Image}
\DeclareMathOperator{\rst}{Rst}
\DeclareMathOperator{\softrst}{SoftRst}

\DeclareMathOperator{\pa}{Pa}

\title{Causal Abstraction with Soft Interventions}
\author[1]{Riccardo Massidda}
\author[2]{Atticus Geiger}
\author[2]{Thomas Icard}
\author[1]{Davide Bacciu}
\affil[1]{Università di Pisa}
\affil[ ]{\texttt{\small riccardo.massidda@phd.unipi.it, davide.bacciu@unipi.it}}
\affil[2]{Stanford University}
\affil[ ]{\texttt{\small \{atticusg,icard\}@stanford.edu}}
\date{\vspace{-5ex}}

\begin{document}
\maketitle

\begin{abstract}
Causal abstraction provides a theory describing how several causal models can represent the same system at different levels of detail. Existing theoretical proposals limit the analysis of abstract models to ``hard'' interventions fixing causal variables to be constant values. In this work, we extend causal abstraction to ``soft'' interventions, which assign possibly non-constant functions to variables without adding new causal connections.
Specifically, (i) we generalize $\tau$-abstraction from \citet{beckers2019abstracting} to soft interventions, (ii) we propose a further definition of soft abstraction to ensure a unique map~$\omega$ between soft interventions, and (iii) we prove that our constructive definition of soft abstraction guarantees the intervention map~$\omega$ has a specific and necessary explicit form.
\end{abstract}

\section{Introduction}

Causal modeling is a crucial tool to represent, reason, and act on systems composed of causally connected independent mechanisms~\citep{peters2017elements,bareinboim:etal20}. A causal analysis requires a choice of variables; that is, we must choose the level of \emph{abstraction} in which to couch an analysis \citep{woodwardchoice}.  
The question becomes especially salient with a large number of densely connected variables that are difficult to analyze or understand.
As such, the fundamental concept of \textit{causal abstraction} has played a key role in the scientific investigation of complex phenomena, including weather patterns \citep{chalupka2016}, brains \citep{Dubois2020PersonalityBT,Dubois2020CausalMO}, and deep learning artificial intelligence models \citep{Chalupka:2015, geiger-etal:2020:blackbox, geiger2021causal}.

Within causal modeling, there is a recent growing body of literature on formal theories of causal abstraction \citep{Chalupka,rubenstein2017causal, zennaro2022abstraction}. However, following the widespread $\operatorname{do}$-operator~\citep{pearl2009causality}, existing theories of causal abstraction only consider \textit{hard interventions}, where a variable is fixed to a single value. Not all mechanistic changes to a real-world system can be represented by a hard intervention. For instance, increasing network weights in a deep learning model is not a matter of fixing the values in representations; the causal mechanisms have been altered, but not fixed. In this context, soft interventions formalize the replacement of causal mechanisms with different and possibly non-constant mechanisms~\citep{eberhardt2007interventions}. 

In this paper, we extend previous work on $\tau$-abstraction~\citep{beckers2019abstracting} to address the description of non-constant abstracted mechanisms.
In Section~\ref{sec:softabstraction}, we introduce a generalization of the underlying formal concepts and we prove that, while correctly handling low-level soft interventions, the resulting definition of $\tau$-abstraction is ambiguous for high-level soft interventions.
Consequently, we propose a novel definition of soft abstraction by strengthening the consistency requirement of $\tau$-abstraction on endogenous settings. Notably, we are able to prove that our extension maintains desirable properties of $\tau$-abstraction and that it is equivalent to the latter whenever the two models are hard-intervened.
Finally, in Section~\ref{sec:constructivesoftabstraction}, we specialize our definition to the constructive case, where each high-level variable depends on a subset of the low-level variables, as in Figure~\ref{fig:explicitomega}. As a further contribution, we prove that constructive soft abstraction implies a unique explicit form on the function mapping interventions from the lower- to the higher-level model.

\definecolor{qcolor}{RGB}{127, 161, 170}
\definecolor{pcolor}{RGB}{222, 181, 106}
\definecolor{icolor}{RGB}{180, 40, 40}
\definecolor{ycolor}{RGB}{74, 154, 71}

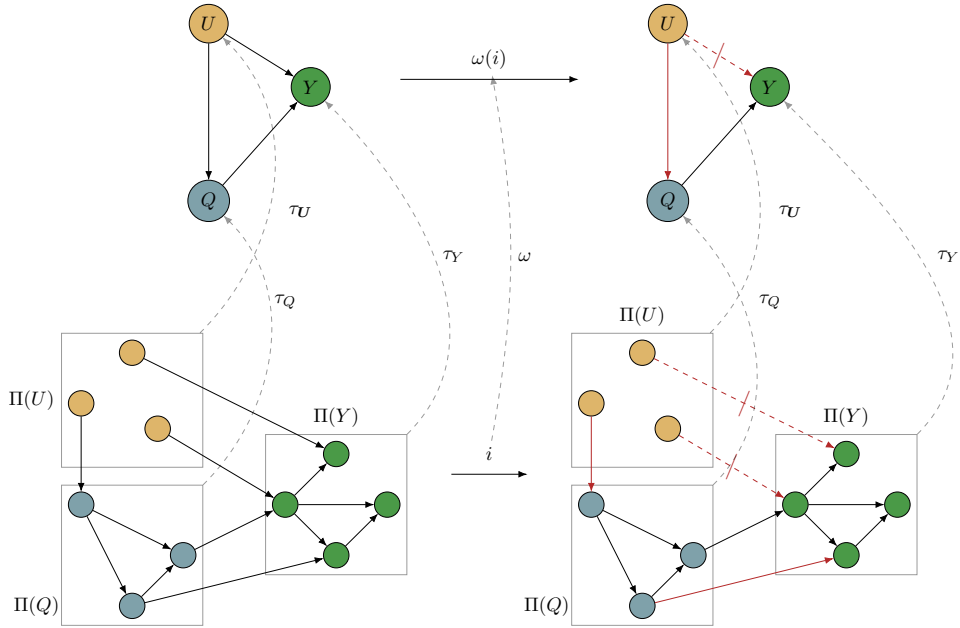
\begin{figure}[t]
\centering
\resizebox{0.85\textwidth}{!}{
\begin{tikzpicture}[minimum width=0.5cm]
{
  \def\opac{0.4}
  \def\xshift{-3.5}
  \def\yshift{7}
  \node[draw, circle,fill=pcolor] (P) at (-5 + \xshift, -0.5+\yshift) {$U$};
  \node[draw, circle,fill=qcolor] (Q) at (-5 + \xshift, -4+\yshift) {$Q$};
  \node[draw, circle,fill=ycolor] (H) at (-3 + \xshift, -1.75+\yshift) {$Y$};
  \draw[-{Latex}] (P)--(Q);
  \draw[-{Latex}] (P)--(H);
  \draw[-{Latex}] (Q)--(H);
  \node[draw=none,rectangle, opacity = \opac, fit=(P) (Q) (H)] (PQH) {};
  
  \def\xshift{5.5}
  \def\yshift{7}
  \node[draw, circle,fill=pcolor] (P2) at (-5 + \xshift, -0.5+\yshift) {$U$};
  \node[draw, circle,fill=qcolor] (Q2) at (-5 + \xshift, -4+\yshift) {$Q$};
  \node[draw, circle,fill=ycolor] (H2) at (-3 + \xshift, -1.75+\yshift) {$Y$};
  \draw[-{Latex},color=icolor] (P2)--(Q2);
  \draw[-{Latex},color=icolor,dashed] (P2)--(H2);
  \node[draw=none](STRIKE) at ($(P2)!0.5!(H2)$){\color{icolor}\Large\textbf{/}};
  \draw[-{Latex}] (Q2)--(H2);
  \node[draw=none,rectangle, opacity = \opac, fit=(P2) (Q2) (H2)] (PQH2) {};

  \node[draw, circle,fill=pcolor] (LP1) at (0, 0) {};
  \node[draw, circle,fill=pcolor] (LP2) at (-1, -1) {};
  \node[draw, circle,fill=pcolor] (LP3) at (0.5, -1.5) {};
  \node[draw=none] (BOUNDARYRIGHT) at (1, -2) {};
  \node[draw,rectangle, opacity = \opac, fit=(LP1) (LP2) (BOUNDARYRIGHT),label={$\Pi(U)$}] (LPlab) {};
  \node[draw, circle,fill=qcolor] (LQ1) at (-1, -3) {};
  \node[draw, circle,fill=qcolor] (LQ2) at (1, -4) {};
  \node[draw, circle,fill=qcolor] (LQ3) at (0, -5) {};
  \node[draw,rectangle, opacity = \opac, fit=(LQ1) (LQ2) (LQ3),label={[yshift=-1cm,xshift=0.1cm]left:$\Pi(Q)$}] (LQlab) {};
  \node[draw, circle,fill=ycolor] (LH1) at (3, -3) {};
  \node[draw, circle,fill=ycolor] (LH2) at (4, -2) {};
  \node[draw, circle,fill=ycolor] (LH3) at (4, -4) {};
  \node[draw, circle,fill=ycolor] (LH4) at (5, -3) {};
  \node[draw,rectangle, opacity = \opac, fit=(LH1) (LH2) (LH3) (LH4),label={$\Pi(Y)$}] (LHlab) {};
  \draw[-{Latex},color=icolor,dashed] (LP1)--(LH2);
  \draw[-{Latex},color=icolor,dashed] (LP3)--(LH1);
  \draw[-{Latex},color=icolor] (LP2)--(LQ1);
  \draw[-{Latex}] (LQ1)--(LQ2);
  \draw[-{Latex}] (LQ1)--(LQ3);
  \draw[-{Latex}] (LQ2)--(LH1);
  \draw[-{Latex}] (LQ3)--(LQ2);
  \draw[-{Latex},color=icolor] (LQ3)--(LH3);
  \draw[-{Latex}] (LH1)--(LH2);
  \draw[-{Latex}] (LH1)--(LH3);
  \draw[-{Latex}] (LH1)--(LH4);
  \draw[-{Latex}] (LH3)--(LH4);
  
 \def\xshift{-10}

  \node[draw, circle,fill=pcolor] (LP12) at (0 + \xshift, 0) {};
  \node[draw, circle,fill=pcolor] (LP22) at (-1+ \xshift, -1) {};
  \node[draw, circle,fill=pcolor] (LP32) at (0.5+ \xshift, -1.5) {};
  \node[draw=none] (BOUNDARYLEFT) at (1+ \xshift, -2) {};
  \node[draw,rectangle, opacity = \opac, fit=(LP12) (LP22) (BOUNDARYLEFT),label={left:$\Pi(U)$}] (LPlab2) {};
  \node[draw, circle,fill=qcolor] (LQ12) at (-1+ \xshift, -3) {};
  \node[draw, circle,fill=qcolor] (LQ22) at (1+ \xshift, -4) {};
  \node[draw, circle,fill=qcolor] (LQ32) at (0+ \xshift, -5) {};
  \node[draw,rectangle, opacity = \opac, fit=(LQ12) (LQ22) (LQ32),label={[yshift=-1cm,xshift=0.1cm]left:$\Pi(Q)$}] (LQlab2) {};
  \node[draw, circle,fill=ycolor] (LH12) at (3+ \xshift, -3) {};
  \node[draw, circle,fill=ycolor] (LH22) at (4+ \xshift, -2) {};
  \node[draw, circle,fill=ycolor] (LH32) at (4+ \xshift, -4) {};
  \node[draw, circle,fill=ycolor] (LH42) at (5+ \xshift, -3) {};
  \node[draw,rectangle, opacity = \opac, fit=(LH12) (LH22) (LH32) (LH42),label={$\Pi(Y)$}] (LHlab2) {};
  \draw[-{Latex}] (LP12)--(LH22);
  \draw[-{Latex}] (LP32)--(LH12);
  \draw[-{Latex}] (LP22)--(LQ12);
  \draw[-{Latex}] (LQ12)--(LQ22);
  \draw[-{Latex}] (LQ12)--(LQ32);
  \draw[-{Latex}] (LQ22)--(LH12);
  \draw[-{Latex}] (LQ32)--(LQ22);
  \draw[-{Latex}] (LQ32)--(LH32);
  \draw[-{Latex}] (LH12)--(LH22);
  \draw[-{Latex}] (LH12)--(LH32);
  \draw[-{Latex}] (LH12)--(LH42);
  \draw[-{Latex}] (LH32)--(LH42);


  
  \node[draw=none](STRIKE) at ($(LP1)!0.5!(LH2)$){\color{icolor}\Large\textbf{/}};
  \node[draw=none](STRIKE) at ($(LP3)!0.5!(LH1)$){\color{icolor}\Large\textbf{/}};
  
  \node[] (bigA) at (-5,5.4) {};
  \node[] (bigB) at (-1,5.4) {};
  \node[](omegai) at (-3,5.8) {$\omega(i)$};
  \draw[-{Latex}] (bigA) -- (bigB);
  
  \node[] (bigA) at (-4,-2.4) {};
  \node[] (bigB) at (-2,-2.4) {};
  \node[](i) at (-3,-2) {$i$};
  \draw[-{Latex}] (bigA) -- (bigB);
  
 \draw[-{Latex},dashed,opacity=\opac] (LPlab2) to[in = -45, out = 45]  (P);
 \draw[-{Latex},dashed,opacity=\opac] (LHlab2) to[in = -45, out = 45]  (H);
 \draw[-{Latex},dashed,opacity=\opac] (LQlab2) to[in = -45, out = 45]  (Q);
 
 \draw[-{Latex},dashed,opacity=\opac] (LPlab) to[in = -45, out = 45]  (P2);
 \draw[-{Latex},dashed,opacity=\opac] (LHlab) to[in = -45, out = 45]  (H2);
 \draw[-{Latex},dashed,opacity=\opac] (LQlab) to[in = -45, out = 45]  (Q2);
 
 \draw[-{Latex},dashed,opacity=\opac] (i) to[in = -80, out = 80]  (omegai);
 
  \node[](omegai) at (-2.3,1.9) {$\omega$};
 
  \node[](omegai) at (-6.7,2.8) {$\tau_{\bm{U}}$};
  \node[](omegai) at (-7,1) {$\tau_Q$};
  \node[](omegai) at (-3.7,1.9) {$\tau_Y$};
 
  \node[](omegai) at (2.9,2.8) {$\tau_{\bm{U}}$};
  \node[](omegai) at (2.5,1) {$\tau_Q$};
  \node[](omegai) at (6,1.9) {$\tau_Y$};
  
}
\end{tikzpicture}
}
\caption{
A visual depiction of constructive soft abstraction. The alignment between the high and low-level models is given by a map $\Pi$ from high-level variables to sets of low-level variables and functions $\tau$ mapping exogenous and endogenous settings. The surjective partial function~$\omega$ maps a low-level intervention $i$ to a high-level intervention~$\omega(i)$. Notably, the function~$\omega$ is uniquely induced by the functions $\tau$, and we provide its explicit form. Interventions may be hard (indicated by the red slash through the dashed red arrow) or soft (indicated by a solid red arrow with no slash). Observe that an intervention being performed on the connection between low-level clusters does not necessarily result in a high-level interventions.
}\label{fig:explicitomega}
\end{figure}

\section{Related Work}\label{sec:related}

\citet{rubenstein2017causal} introduce the notion of an ``exact transformation'' to understand when a probabilistic causal model can be transformed into another model in a causally consistent way. The core mathematical objects in an exact transformation are $\tau$, a map between total settings of the causal models, and $\omega$, a surjective and order-preserving map between hard interventions. \citet{beckers2019abstracting} build on this work, introducing the notion of $\tau$-abstraction, which requires $\tau$ to induce a specific function~$\omega$ in order to rule out unintuitive aspects of exact transformation. Our work directly extends their definitions to handle soft interventions. Since our work focuses on redefining foundations of causal abstraction for soft interventions, we do not tackle the problem of measuring approximate abstraction~\citep{beckers20a,rischel2021compositional}. For an in depth survey of similar abstraction relations, we refer the reader to \citet{zennaro2022abstraction}, which reviews various causal abstraction proposals for hard intervened models.

\section{Structural Causal Models}\label{sec:structuralcausalmodels}

Throughout the paper, we use notation $\bm{X}$ to denote a fixed set of variables, each ${V \in \bm{X}}$ with domain $\val(V)$ of possible values. We refer to set of all possible subsets of variables as the power~set $\mathcal{P}(\bm{X})$. The domain of any subset of variables $\bm{V} \subseteq \bm{X}$ is the Cartesian product of the members domains $\val(\bm{V}) = \bigtimes_{V \in \bm{V}}\val(V)$.
We refer to the values $\bm{v} \in \val(\bm{V})$ as \emph{partial settings}.
Similarly, we call $\bm{x} \in \val(\bm{X})$ a \emph{total setting}.
Finally, given a total setting $\bm{x}\in\val(\bm{X})$ and a set of variables $\bm{V} \subseteq \bm{X}$, we define the projection $\proj(\bm{x},\bm{V})\in\val(\bm{V})$ as the restriction of $\bm{x}$ to the variables~$\bm{V}$.

Structural Causal Models (SCMs) express causal connections as functional relations between variables~\citep{bongers2021foundations}.
In particular, SCMs distinguish between endogenous variables, for which the causal mechanism is determined, and exogenous independent random variables.
\begin{definition}[Structural Causal Model]
    A Structural Causal Model~$\M$ is a tuple $\Mtuple$ such that
    \begin{enumerate}
        \item $\bm{X}$ is a set of endogenous variables with domain $\val(X)$ for each $X\in\bm{X}$,
        \item $\bm{E}$ is a set of exogenous variables with domain $\val(E)$ for each $E\in\bm{E}$,
        \item $\bm{F}$ is a set of functions with form $F_X \colon \val(\bm{X}) \times \val(\bm{E}) \to \val(X)$ for each $X\in\bm{X}$,
        \item $\Pr_E$ is a probability measure on $\val(\bm{E})$.
    \end{enumerate}
\end{definition}
For each endogenous variable ${X\in\bm{X}}$, the structural equation $F_X$ depends only on a subset of variables.
We refer to this subset as the parents $\pa(X) \subseteq{\bm{X}\cup\bm{E}}$ of the variable $X$.
Endogenous settings are fully determined by exogenous settings.
Therefore, each model can be considered as a function $\mathcal{M} \colon \val(\bm{E}) \to \val(\bm{X})$ from exogenous to endogenous values.

An intervention describes an operation on an SCM that produces a possibly different causal model by modifying its structural equations.
We refer to interventions fixing structural equations to constant values as ``hard'' interventions~\citep{pearl2009causality}.
\begin{definition}[Hard Intervention]
Given an SCM $\M = (\bm{X}, \bm{E}, \bm{F}, \Pr_E)$, a subset of endogenous variables ${\bm{V}\subseteq\bm{X}}$, and a partial setting ${\bm{v}\in\val(\bm{V})}$, a hard intervention $i = (\bm{V} \gets \bm{v})$ produces an SCM $\M_i = (\bm{X}, \bm{U}, \bm{F}^i, \Pr_E )$ such that
\begin{equation}
  F_X^i(\bm{x}, \bm{e}) = 
  \begin{cases}
    \proj(\bm{v},X) & X \in \bm{V}\\
    F_X(\bm{x}, \bm{e}) & \text{otherwise}
  \end{cases}
\end{equation}
for each endogenous variable $X\in\bm{X}$,
\end{definition}

By constraining one or more endogenous variables, a hard intervention reduces the set of values possibly assumed by an SCM\@.
Following~\citet{beckers2019abstracting}, given an intervention on a model, we refer to the resulting subset of assumable values as its restriction set.
\begin{definition}[Restriction Set]
The restriction $\rst(\Mi)$ of a hard intervention $i={(\bm{V}\gets \bm{v})}$ on a model $\M$ is the subset of total settings on $\bm{X}\supseteq\bm{V}$ matching the partial setting $\bm{v}$. Formally,
    \begin{equation}
        \rst(\M_i) = \{{\bm{x}\in\val(\bm{X})}\mid{\bm{v}=\proj(\bm{x},\bm{V})}\}.
    \end{equation}
\end{definition}

\section{Causal Abstraction with Hard Interventions}\label{sec:hardabstraction}

Intuitively, \citet{beckers2019abstracting} define an abstraction between two causal models whenever a function on their variables settings induces a function between interventions.
The intervention map must be consistent: intervening on the higher-level abstraction should be equivalent to abstracting the intervened lower-level model.
\begin{definition}[Abstraction]\label{def:abstraction}
Let $\L=\Ltuple$ and $\H=\Htuple$ be two SCMs, $\I$ and $\J$ be their respective sets of admissible hard interventions, $\tau_{\bm{Y}} \colon \val(\bm{X}) \to \val(\bm{Y})$ be a surjective function between endogenous settings, and $\tau_{\bm{U}} \colon \val(\bm{E}) \to \val(\bm{U})$ be a surjective function between exogenous settings. We can use $\tau_{\bm{Y}}$ to induce $\omega\colon\I\to\J$, a partial function from low-level hard interventions to high-level hard interventions. Specifically, $\omega$ is defined on the low-level intervention $i\in\I$ whenever there exists an high-level intervention $\omega(i)\in\J$ such that 
\begin{equation}\label{eq:omega}
    \rst(\H_{\omega(i)})= \{\tau_{\bm{Y}}(\bm{x}) \mid \bm{x} \in \rst(\L_i)\}.
\end{equation}
The model $\H$ is a $\tau$-abstraction of $\L$ whenever $\omega$ is surjective and, for any intervention~$i \in\I$ and exogenous setting~$\bm{e}\in\val(\bm{E})$ at the lower level, it holds
\begin{equation}\label{eq:compatibility}
    \tau_{\bm{Y}}(\L_i(\bm{e})) = \H_{\omega(i)}(\tau_{\bm{U}}(\bm{e})).
\end{equation}
\end{definition}

Given Definition~\ref{def:abstraction}, whenever the function $\omega$ between interventions exists, it is unique. Furthermore, by ensuring consistency on all exogenous settings, abstraction entails that the high-level model is an exact transformation \citep{rubenstein2017causal} between SCMs for every possible exogenous distribution. Therefore, whenever a model abstracts another, the intervention mapping~$\omega$ has a fixed point in the empty intervention and preserves the following intervention ordering.
\begin{definition}[Hard Interventions Ordering]
    A hard intervention $i_1 = {(\bm{V} \gets \bm{v})}$ on an SCM~$\M$ precedes another intervention $i_2 = {(\bm{W} \gets \bm{w})}$ whenever the former sets to the same values a subset of the variables intervened by the latter. Formally,
    \begin{equation}
        i_1 \sqsubseteq i_2 \iff \bm{V}\subseteq\bm{W} \land \bm{v} = \proj(\bm{w}, \bm{V}).
    \end{equation}
\end{definition}

\section{Causal Abstraction with Soft Interventions}\label{sec:softabstraction}

In this section, we extend $\tau$-abstraction to relate causal models according to their response to non-constant interventions on their endogenous variables. Firstly, we generalize the necessary definitions of restriction set and partial ordering between hard interventions (Subsection~\ref{subsec:softinterventions}). Then, we define low soft abstraction to handle soft interventions on the low-level model (Subsection~\ref{subsec:lowsoftabstraction}). In this context, we introduce a running example showing how our definition correctly characterizes low-level soft abstracted models. Then, we prove that this preliminary definition is ambiguous for soft interventions on the higher-level model. Therefore, we modify our example to display an ambiguous scenario. Finally, we present our definition of soft abstraction, which correctly disambiguates higher-level soft interventions (Subsection~\ref{subsec:softabstraction}). Consequently, we showcase how our contribution correctly solves the critical scenario in our running example.

\subsection{Soft Interventions}\label{subsec:softinterventions}

Existing definitions of abstraction explicitly build on ``hard'' interventions that fix the structural equation of each intervened variable to a constant.
A \textit{soft} intervention is a generalization of an hard intervention that instead replaces structural equations with new, possibly non-constant functions.
\begin{definition}[Soft Intervention]
Given an SCM $\M = \Mtuple$,
a subset of endogenous variables ${\bm{V}\subseteq\bm{X}}$,
and a set of functions $\bm{f}$,
a soft intervention $i = {(\bm{V} \gets \bm{f})}$ produces
an SCM $\Mi = \Mituple$ such that
\begin{equation}
  F_X^i(\bm{x}, \bm{e}) = 
  \begin{cases}
    f_X(\bm{x}, \bm{e}) & X \in \bm{V}\\
    F_X(\bm{x}, \bm{e}) & \text{otherwise}
  \end{cases}
\end{equation}
for each endogenous variable $X\in\bm{X}$. 
\end{definition}

For our purposes, we will impose two constraints on soft interventions.
First, for each variable ${X\in\bm{V}}$ the function $f_X$ must share domain and codomain with the non-intervened structural equation $F_X$.
In this way, we enforce that a soft intervention does not alter the ``type'' of a variable.
Second, we require intervened structural equations to depend at most on the same parents $\pa(X)$ from the non-intervened model, meaning no new causal dependencies are formed between variables.
Given this formalization, hard interventions are a special case of soft interventions where structural equations are replaced by constant functions.

To define causal abstraction between soft-intervened models, we must first extend the intervention map~$\omega$ to be a partial function between soft interventions.
Abstraction uniquely induces such function given the restrictions of hard interventions on the lower- and higher-level models.
To this end, we propose the following generalization of restriction set for soft interventions on an SCM\@.
\begin{definition}[Soft Restriction Set]
    The soft restriction $\softrst(\Mi)$ of a soft intervention $i = {(\bm{V}\gets\bm{f})}$ on a model $\M$ is the subset of total settings on ${\bm{X}\supseteq\bm{V}}$ matching any partial setting from the image of $\bm{f}$. 
    Formally,
    \begin{equation}
    \softrst(\M_i) = \{{\bm{x}\in\val(\bm{X})} \mid \proj(\bm{x},V) \in \img(\bm{f})\}
    \end{equation}
\end{definition}

Previous work introduced partial ordering between hard interventions according to constants assigned to intervened variables~\citep{rubenstein2017causal}.
Furthermore, preserving intervention ordering constitutes a desirable property of causal abstraction.
Therefore, we propose the following partial ordering on soft interventions.
\begin{definition}[Soft Interventions Ordering]
    Given two soft interventions $i_1$ and $i_2$ on an SCM~$\M$, the intervention $i_1$ precedes $i_2$ whenever the soft restriction of the former contains that of the latter. Formally,
    \begin{equation}
        i_1 \preceq i_2 \iff \softrst(\M_{i_1}) \supseteq \softrst(\M_{i_2}).
    \end{equation}
\end{definition}

Our definition of restriction for soft interventions is a strict generalization of the corresponding definition on hard interventions. Our partial ordering on soft interventions is also a strict generalization of the ordering on hard interventions. See Appendix~\ref{app:generalization} for proofs.

\subsection{Low-Level Soft Interventions}\label{subsec:lowsoftabstraction}

The definition of $\tau$-abstraction induces a unique transformation~$\omega$ between interventions whenever the restrictions of any pair of lower- and higher-level interventions are related by $\tau$.
Given our generalization, we can use soft restriction in place of strict restriction for a definition of abstraction immediately supporting low-level soft interventions.
\begin{definition}[Low Soft Abstraction]\label{def:lowsoftabstraction}
  Let $\L=\Ltuple$ and $\H=\Htuple$ be two SCMs, $\I$ and $\J$ be their respective sets of admissible interventions, $\tau_{\bm{Y}} \colon \val(\bm{X}) \to \val(\bm{Y})$ be a surjective function between endogenous settings, and $\tau_{\bm{U}} \colon \val(\bm{E}) \to \val(\bm{U})$ be a surjective function between exogenous settings.
  The model $\H$ is a low soft $\tau$-abstraction of $\L$ whenever there exists a surjective function ${\omega\colon\I\to\J}$ such that, for any intervention~$i \in\I$ it holds
  \begin{equation}\label{eq:lowsoftomega}
      \softrst(\H_{\omega(i)}) = \{\tau_{\bm{Y}}(\bm{x}) \mid \bm{x} \in \softrst(\L_i) \}
  \end{equation}
  and for every lower-level exogenous setting~$\bm{e}\in\val(\bm{E})$, it holds
  \begin{equation}\label{eq:lowsoftcompatibility}
      \tau_{\bm{Y}}(\L_i(\bm{e})) = \H_{\omega(i)}(\tau_{\bm{U}}(\bm{e})).
  \end{equation}
\end{definition}

\begin{figure}
\centering
\begin{tikzpicture}
{
  \node[draw,rectangle,rounded corners,thick,anchor=north west,minimum width=3.2cm,minimum height=2.1cm] at (0,0cm) (HBOX) {};
  \node[draw=none, above=of HBOX.west, anchor=south west, yshift=0.1cm](H) {\Large $\H$};
  \node[draw=none, below=of HBOX.north west, anchor=south west, yshift=0.2cm, xshift=0.1cm](Y1) {$Y_1 \coloneqq U_1$};
  \node[draw=none, below=of Y1.west, anchor=west, yshift=0.4cm](Y2) {$Y_2 \coloneqq U_2$};
  \node[draw=none, below=of Y2.west, anchor=west, yshift=0.4cm](Y3) {$Y_3 \coloneqq  Y_1 \lor Y_2$};
  
  \node[draw,rectangle,rounded corners,thick,anchor=north west,minimum width=3.4cm,minimum height=2.1cm,above=of HBOX,yshift=0.5cm] (LBOX) {};
  \node[draw=none, above=of LBOX.west, anchor=south west, yshift=0.1cm](L) {\Large $\L$};
  \node[draw=none, below=of LBOX.north west, anchor=south west, yshift=0.2cm, xshift=0.1cm](X1) {$X_1 \coloneqq E_1$};
  \node[draw=none, below=of X1.west, anchor=west, yshift=0.4cm](X2) {$X_2 \coloneqq E_2$};
  \node[draw=none, below=of X2.west, anchor=west, yshift=0.4cm](X3) {$X_3 \coloneqq X_1 X_2$};
  
  \node[draw,rectangle,rounded corners,thick,anchor=north west,minimum width=3.4cm,minimum height=2.1cm,right=of LBOX,xshift=2cm] (LiBOX) {};
  \node[draw=none, above=of LiBOX.west, anchor=south west, yshift=0.1cm](L) {\Large $\L_i$};
  \node[draw=none, below=of LiBOX.north west, anchor=south west, yshift=0.2cm, xshift=0.1cm](X1) {$X_1 \coloneqq E_1$};
  \node[draw=none, below=of X1.west, anchor=west, yshift=0.4cm](X2) {$X_2 \coloneqq 2 E_2$};
  \node[draw=none, below=of X2.west, anchor=west, yshift=0.4cm](X3) {$X_3 \coloneqq X_1X_2$};
  
  \node[draw,rectangle,rounded corners,thick,anchor=north west,minimum width=3.4cm,minimum height=2.1cm,right=of HBOX,xshift=2cm] (HiBOX) {};
  \node[draw=none, above=of HiBOX.west, anchor=south west, yshift=0.1cm](L) {\Large $\H_{\omega(i)}$};
  \node[draw=none, below=of HiBOX.north west, anchor=south west, yshift=0.2cm, xshift=0.1cm](Y1) {$Y_1 \coloneqq U_1$};
  \node[draw=none, below=of Y1.west, anchor=west, yshift=0.4cm](Y2) {$Y_2 \coloneqq \mathtt{T}$};
  \node[draw=none, below=of Y2.west, anchor=west, yshift=0.4cm](Y3) {$Y_3 \coloneqq  Y_1 \lor Y_2$};
  
  \draw[-{Latex}] (LBOX)--(LiBOX);
  \node[draw=none, yshift=0.3cm](INT) at ($(LBOX)!0.5!(LiBOX)$){$X_2 \gets 2 E_2$};
  
  \draw[-{Latex}] (HBOX)--(HiBOX);
  \node[draw=none, yshift=-0.3cm](OMEGAINT) at ($(HBOX)!0.5!(HiBOX)$){$Y_2 \gets \mathtt{T}$};

  \draw[-{Latex}] (LBOX)--(HBOX);
  \node[draw=none, xshift=0.3cm](TAUOBS) at ($(LBOX)!0.5!(HBOX)$){$\tau$};

  \draw[-{Latex}] (LiBOX)--(HiBOX);
  \node[draw=none,xshift=0.3cm] (TAUINT) at ($(LiBOX)!0.5!(HiBOX)$){$\tau$};

  \node[draw=none,xshift=0.3cm](EXPLOMEGA) at ($(INT)!0.5!(OMEGAINT)$){$\omega$};
  \draw[-{Latex}] (INT)--(OMEGAINT);
}
\end{tikzpicture}
\caption{
The low-level model~$\L$ multiplies two integers. The high level model~$\H$ computes whether one of two Boolean variables are true. The model $\H$ abstracts the model $\L$ with a map~$\tau$ that computes the parity of each integer at the low-level, where $X_n$ determines $Y_n$. This is because when two integers are multiplied, the product is even if and only if at least one of the integers is even. The soft intervention~$i ={(X_2 \gets 2E_2)}$ doubles the value of a low-level variable $X_2$ making it always even. Therefore, it is mapped to a constant hard intervention $\omega(i) = {(Y_2\gets \mathtt{T})}$ that fixes the high-level variable $Y_2$ to be always true.
}\label{fig:lowsoftabstraction}
\end{figure}
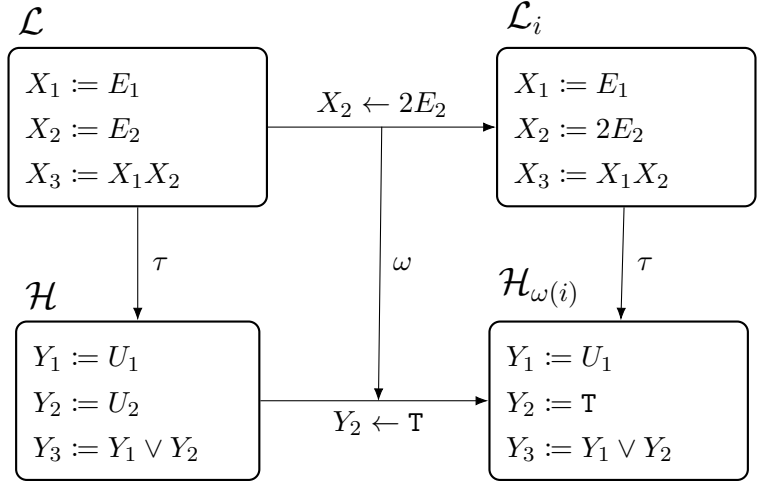

As proved in Appendix~\ref{app:lowersoftabstraction}, by assuming that the high-level model admits only hard interventions, the induced mapping~$\omega$ is uniquely defined, order-preserving, and has a fixed point in the empty intervention.
Furthermore, whenever admissible interventions on both models are hard, low soft $\tau$-abstraction reduces to $\tau$-abstraction. 
As the following example shows, Definition~\ref{def:lowsoftabstraction} effectively represents abstraction whenever a low-level soft intervention results in an hard intervention on the abstract model. 
\begin{example}\label{ex:lowsoftabstraction}
Let $\L=\Ltuple$ and $\H=\Htuple$ be two SCMs with structural equations as reported in Figure~\ref{fig:lowsoftabstraction}.
At the lower-level, every endogenous or exogenous variable $X\in{\bm{X}\cup\bm{E}}$ is integer with domain $\val(X)=\mathbb{Z}$.
The high-level model $\H$ consists instead of Boolean variables $Y\in{\bm{Y}\cup\bm{U}}$ with domain $\val(Y)=\mathbb{B}$.
Given a function $\tau$ composed by element-wise mapping a function $\even\colon\mathbb{Z}\to\mathbb{B}$ testing the parity of each integer, $\H$ is a $\tau$-abstraction of $\L$ for every hard intervention at the lower-level.
Furthermore, any soft intervention at the lower-level that always produces either even or odd integers also results in an abstract hard intervention.
For instance, let us take the soft intervention $i=({X_2\gets 2E_2})$: its soft restriction $\softrst(\L_i)$ is the set of integer triples $\{(x_1, x_2, x_3) \mid \even(x_2) = \mathtt{T}\}$.
Consequently, the result of applying $\tau$ to the soft restriction of $i$ is the set of Boolean triples $\{y_1, y_2, y_3 \mid y_2 = \mathtt{T}\}$.
The set coincides exclusively with the restriction of the high-level hard intervention $j = (Y_2\gets \mathtt{T})$.
Furthermore, the consistency requirement holds as in
\begin{equation}
    \begin{split}
        \tau_{\bm{Y}}(\L_i([e_1,e_2])) &= \tau_Y([e_1, 2e_2, e_1e_2])\\
        &= [\even(e_1), \mathtt{T}, \even(e_1e_2)]\\
        &= [\even(e_1), \mathtt{T}, \even(e_1) \lor \even(e_2)]\\
        &= \H_j(\tau_{\bm{U}}([e_1, e_2])).
    \end{split}
\end{equation}
Therefore, $\H$ low soft abstracts $\L$ and $\omega$ uniquely maps the low level intervention $i$ to $j$.
\end{example}

Crucially, the intervention map $\omega$ is only guaranteed to be unique whenever the set of admissible interventions at the higher level contains exclusively hard interventions. The definition of abstraction provided by \citet{beckers2019abstracting} ensures consistency for all exogenous settings. Nonetheless, whenever structural equations are non-surjective, consistency is not enforced on all endogenous settings. Therefore, an intervention on the low level model might be mapped to multiple distinct abstract soft interventions that coincide for any exogenous-induced endogenous setting. Consequently, as proved in the following theorem, there exist different maps $\omega$ for each possible choice of high-level intervention.
\begin{theorem}[Ambiguity in Low Soft Abstraction]\label{theo:ambiguity}
    Let $\H=\Htuple$ be an SCM with admissible interventions $\J$.
    Given a variable ${V\in\bm{Y}}$, if $\J$ contains two distinct interventions $j ={(V \gets g)}$ and $j' ={(V \gets g')}$ such that
    \begin{enumerate}
    \item $\forall{\bm{u}\in\val(\bm{U})} \colon \proj(\H_j(\bm{u}), V) =
    \proj(\H_{j'}(\bm{u}), V)$,
    \item $\img(g) = \img(g')$,
    \end{enumerate}
    then, for any causal model~$\L$ with admissible interventions $\I$, whenever $\H$ low soft $\tau$-abstracts $\L$, the corresponding function $\omega\colon\I\to\J$ is not uniquely defined. 
\end{theorem}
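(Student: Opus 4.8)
The plan is to show that no witnessing map is canonical: starting from any surjective $\omega\colon\I\to\J$ realizing low soft $\tau$-abstraction, I would swap the two interventions $j$ and $j'$ wherever they occur in the image of $\omega$, obtaining a second witnessing map $\omega'\neq\omega$. Concretely, let $\sigma\colon\J\to\J$ be the transposition exchanging $j$ and $j'$ and fixing every other high-level intervention, and set $\omega'=\sigma\circ\omega$. Since $\omega$ is surjective, both $j$ and $j'$ have nonempty preimages under $\omega$, and these are disjoint because $\omega$ is a function and $j\neq j'$; hence $\omega'$ is surjective and, picking $i_0\in\omega^{-1}(j)$, we get $\omega'(i_0)=j'\neq j=\omega(i_0)$, so $\omega'\neq\omega$. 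Because $\omega$ and $\omega'$ agree on every intervention not mapped into $\{j,j'\}$, verifying that $\omega'$ still satisfies \eqref{eq:lowsoftomega} and \eqref{eq:lowsoftcompatibility} reduces to two claims: (A) $\softrst(\H_j)=\softrst(\H_{j'})$, and (B) $\H_j$ and $\H_{j'}$ coincide as functions $\val(\bm{U})\to\val(\bm{Y})$.

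Claim (A) is essentially definitional: for a single intervened variable, $\softrst(\H_{(V\gets g)})=\{\bm{y}\in\val(\bm{Y})\mid\proj(\bm{y},V)\in\img(g)\}$ depends on the intervened equation only through its image, so hypothesis~2, $\img(g)=\img(g')$, gives (A) immediately.

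Claim (B) is the crux, since hypothesis~1 only constrains the $V$-coordinate of $\H_j(\bm{u})$ and $\H_{j'}(\bm{u})$, whereas the remaining coordinates may depend on $V$ through the structural equations. My approach is to pin $V$ down with an auxiliary hard intervention. Fix $\bm{u}\in\val(\bm{U})$, write $\bm{y}=\H_j(\bm{u})$ and $\bm{y}'=\H_{j'}(\bm{u})$, and let $c=\proj(\bm{y},V)=\proj(\bm{y}',V)$ using hypothesis~1. Consider the hard intervention $k=(V\gets c)$. All three of $j$, $j'$, $k$ leave the structural equation of every $W\in\bm{Y}\setminus\{V\}$ equal to the original $G_W$, so $\bm{y}$ solves $\H_k$ under $\bm{u}$ (the equations for $W\neq V$ hold because $\bm{y}$ solves $\H_j$, and the equation for $V$ reads $c=c$), and likewise $\bm{y}'$ solves $\H_k$ under $\bm{u}$. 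Since every model considered, including every intervened model, is assumed to determine endogenous settings uniquely from exogenous ones, $\H_k$ has a unique solution, so $\bm{y}=\bm{y}'$; as $\bm{u}$ was arbitrary, (B) follows.

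It remains to assemble the pieces. For $i\in\I$ with $\omega(i)\notin\{j,j'\}$, equations \eqref{eq:lowsoftomega} and \eqref{eq:lowsoftcompatibility} for $\omega'$ are exactly those for $\omega$. For $i$ with $\omega(i)=j$ (so $\omega'(i)=j'$), \eqref{eq:lowsoftomega} holds because $\softrst(\H_{j'})=\softrst(\H_j)=\{\tau_{\bm{Y}}(\bm{x})\mid\bm{x}\in\softrst(\L_i)\}$ by (A) and the hypothesis on $\omega$, and \eqref{eq:lowsoftcompatibility} holds because $\H_{j'}(\tau_{\bm{U}}(\bm{e}))=\H_j(\tau_{\bm{U}}(\bm{e}))=\tau_{\bm{Y}}(\L_i(\bm{e}))$ by (B) and the hypothesis on $\omega$; the case $\omega(i)=j'$ is symmetric. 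Hence $\omega'$ also witnesses that $\H$ low soft $\tau$-abstracts $\L$, so the induced intervention map is not uniquely defined. The only delicate point is Claim~(B)—in particular the appeal to unique solvability of the hard-intervened model $\H_k$—everything else being bookkeeping.
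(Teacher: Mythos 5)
Your proposal is correct and follows the same overall strategy as the paper: exhibit a second witnessing map by swapping $j$ and $j'$ in the image of $\omega$ (the paper swaps the values on one chosen preimage pair $i_1\in\omega^{-1}(j)$, $i_2\in\omega^{-1}(j')$ rather than composing with a transposition, but this is immaterial). Where you genuinely improve on the paper is your Claim~(B). The paper needs exactly this — that $\H_j$ and $\H_{j'}$ agree as maps $\val(\bm{U})\to\val(\bm{Y})$, not merely in their $V$-coordinate — but justifies it only by asserting that ``$g,g'$ output the same values given every possible configuration of their parents,'' which is both stronger than hypothesis~1 actually provides (and is false in the paper's own Example~2, where the two replacement functions disagree on the unrealized configuration $Y_1=Y_2=\mathtt{F}$) and still leaves the agreement of the coordinates downstream of $V$ unaddressed. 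Your device of introducing the auxiliary hard intervention $k=(V\gets c)$ and invoking unique solvability of $\H_k$ closes both gaps cleanly; it buys a rigorous argument at the price of assuming every intervened model (not only the admissible ones) has a unique solution, which is implicit in the paper's treatment of SCMs as functions $\val(\bm{E})\to\val(\bm{X})$ and is automatic in the acyclic case, where one could equivalently argue by induction along a topological order. Claim~(A) and the final bookkeeping match the paper's proof.
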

\begin{proof}
    We report the proof in Appendix~\ref{app:ambiguity}.
\end{proof}

In the following, we alter the previous example to report a scenario where non-surjective structural equations lead to a possibly non-unique intervention map~$\omega$.
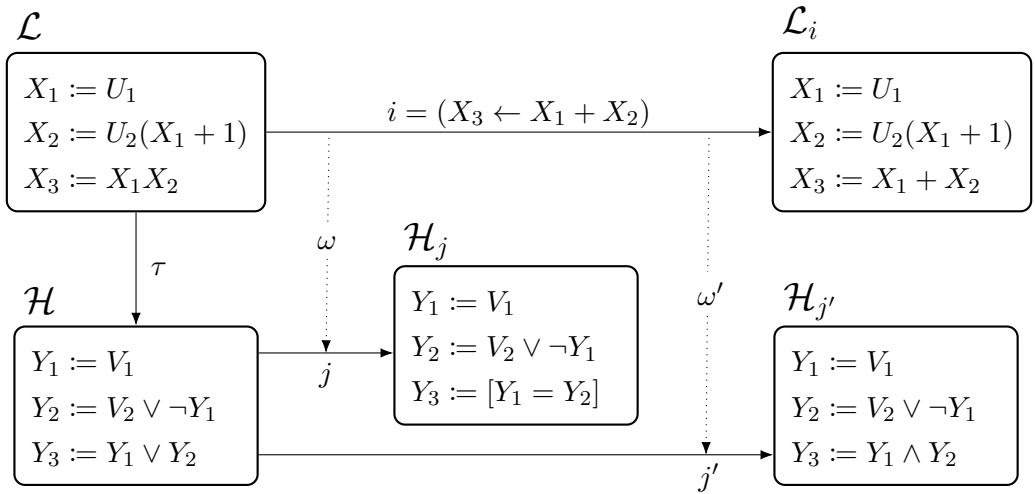
\begin{figure}[t]
\centering
\begin{tikzpicture}
{
  \node[draw,rectangle,rounded corners,thick,anchor=north west,minimum width=3.2cm,minimum height=2.1cm] at (0,0cm) (HBOX) {};
  \node[draw=none, above=of HBOX.west, anchor=south west, yshift=0.1cm](H) {\Large $\H$};
  \node[draw=none, below=of HBOX.north west, anchor=south west, yshift=0.2cm, xshift=0.1cm](Y1) {$Y_1 \coloneqq V_1$};
  \node[draw=none, below=of Y1.west, anchor=west, yshift=0.4cm](Y2) {$Y_2 \coloneqq V_2 \lor \neg Y_1$};
  \node[draw=none, below=of Y2.west, anchor=west, yshift=0.4cm](Y3) {$Y_3 \coloneqq  Y_1 \lor Y_2$};
  
  \node[draw,rectangle,rounded corners,thick,anchor=north west,minimum width=3.2cm,minimum height=2.1cm,right of=HBOX,xshift=4cm,yshift=0.8cm] (0,0cm) (HjBOX) {};
  \node[draw=none, above=of HjBOX.west, anchor=south west](Hj) {\Large $\H_j$};
  \node[draw=none, below=of HjBOX.north west, anchor=south west, yshift=0.2cm, xshift=0.1cm](Y1) {$Y_1 \coloneqq V_1$};
  \node[draw=none, below=of Y1.west, anchor=west, yshift=0.4cm](Y2) {$Y_2 \coloneqq V_2 \lor \neg Y_1$};
  \node[draw=none, below=of Y2.west, anchor=west, yshift=0.4cm](Y3) {$Y_3 \coloneqq  [Y_1 = Y_2]$};
  
  \node[draw,rectangle,rounded corners,thick,anchor=north west,minimum width=3.2cm,minimum height=2.1cm,right of=HjBOX,xshift=4cm,yshift=-0.8cm] (0,0cm) (HjprimeBOX) {};
  \node[draw=none, above=of HjprimeBOX.west, anchor=south west](Hjprime) {\Large $\H_{j'}$};
  \node[draw=none, below=of HjprimeBOX.north west, anchor=south west, yshift=0.2cm, xshift=0.1cm](Y1) {$Y_1 \coloneqq V_1$};
  \node[draw=none, below=of Y1.west, anchor=west, yshift=0.4cm](Y2) {$Y_2 \coloneqq V_2 \lor \neg Y_1$};
  \node[draw=none, below=of Y2.west, anchor=west, yshift=0.4cm](Y3) {$Y_3 \coloneqq  Y_1 \land Y_2$};
  
  \node[draw,rectangle,rounded corners,thick,anchor=north west,minimum width=3.4cm,minimum height=2.1cm,above=of HBOX,yshift=0.5cm] (LBOX) {};
  \node[draw=none, above=of LBOX.west, anchor=south west, yshift=0.1cm](L) {\Large $\L$};
  \node[draw=none, below=of LBOX.north west, anchor=south west, yshift=0.2cm, xshift=0.1cm](X1) {$X_1 \coloneqq U_1$};
  \node[draw=none, below=of X1.west, anchor=west, yshift=0.4cm](X2) {$X_2 \coloneqq U_2(X_1+1)$};
  \node[draw=none, below=of X2.west, anchor=west, yshift=0.4cm](X3) {$X_3 \coloneqq  X_1 X_2$};
  
  \node[draw,rectangle,rounded corners,thick,anchor=north west,minimum width=3.4cm,minimum height=2.1cm,right=of LBOX,xshift=5.65cm] (LiBOX) {};
  \node[draw=none, above=of LiBOX.west, anchor=south west, yshift=0.1cm](L) {\Large $\L_i$};
  \node[draw=none, below=of LiBOX.north west, anchor=south west, yshift=0.2cm, xshift=0.1cm](X1) {$X_1 \coloneqq U_1$};
  \node[draw=none, below=of X1.west, anchor=west, yshift=0.4cm](X2) {$X_2 \coloneqq U_2(X_1+1)$};
  \node[draw=none, below=of X2.west, anchor=west, yshift=0.4cm](X3) {$X_3 \coloneqq  X_1 + X_2$};
  
  \draw[-{Latex}] (LBOX)--(HBOX);
  \node[draw=none, xshift=0.3cm](TAUOBS) at ($(LBOX)!0.5!(HBOX)$){$\tau$};
  
  \draw[-{Latex}] (LBOX)--(LiBOX);
  \node[draw=none, yshift=0.25cm](INT) at ($(LBOX)!0.5!(LiBOX)$){$i={(X_3\gets X_1+X_2)}$};
  
  \node[draw=none,yshift=0.7cm] at (HBOX.east) (A) {};
  \node[draw=none,yshift=-0.1cm] at (HjBOX.west) (B) {};
  \node[draw=none,yshift=-0.65cm] at (HBOX.east) (C) {};
  \node[draw=none,yshift=-0.65cm] at (HjprimeBOX.west) (D) {};
  \node[draw=none] at (LBOX.east) (E) {};
  
  \node[draw=none, yshift=-0.3cm](JINT) at ($(A)!0.5!(B)$){$j$};
  \draw[-{Latex}] ([yshift=0.7cm]HBOX.east)--([yshift=-0.1cm]HjBOX.west);
  \node[draw=none, yshift=-0.3cm](JPRIMEINT) at ($(C)!0.875!(D)$){$j'$};
  \draw[-{Latex}] ([yshift=-0.65cm]HBOX.east)--([yshift=-0.65cm]HjprimeBOX.west);
  \draw[-{Latex},dotted] ([xshift=-2.5cm]INT.south)--(JINT.north);
  \node[draw=none,fill=white, xshift=0.85cm](OMEGA) at ($(E)!0.5!(A)$){$\omega$};
  \draw[-{Latex},dotted] ([xshift=2.5cm]INT.south)--([xshift=5cm,yshift=-1.35cm]JINT.north);
  \node[draw=none,fill=white, xshift=5.9cm](OMEGAPRIME) at ($(E)!0.5!(C)$){$\omega'$};
}
\end{tikzpicture}
\caption{
The low-level model~$\L$ performs arithmetic operations on two integers.
The high level model~$\H$ computes a Boolean proposition.
The model $\H$ abstracts the model $\L$ with a map~$\tau$ that computes the parity of each integer at the low-level, where $X_n$ determines $Y_n$. 
Given the intervention $i={(X_3\gets X_1+X_2)}$, there exist two distinct intervention maps satisfying low soft abstraction: $\omega(i) = {(Y_3 \gets {[Y_1 =  Y_2]})}$ and $\omega'(i)={(Y_3 \gets {Y_1 \land Y_2})}$, where the former tests equality.
Intuitively, this occurs since the two interventions differ only when $Y_1$ and $Y_2$ are both false, but this condition is never reached for any exogenous setting.
}\label{fig:ambiguousomega}
\end{figure}

\begin{example}\label{ex:ambiguosomega}
Let $\L=\Ltuple$ and $\H=\Htuple$ be two SCMs with structural equations as reported in Figure~\ref{fig:ambiguousomega}.
As in previous Example~\ref{ex:lowsoftabstraction}, Boolean-valued model~$\H$ $\tau$-abstracts integer-valued model~$\L$ given a function $\tau$ element-wise testing the parity of each integer.
Differently from before, the structural equations of $X_1$ and $X_2$ are jointly not surjective.
In fact, for any exogenous setting, $X_1$ and $X_2$ will never be both odd.
Similarly, at the higher-level, variables $Y_1$ and $Y_2$ will never be both false.
We define a low-level soft intervention
$i={(X_3\gets X_1+X_2)}$ and two high-level soft interventions
$j={(Y_3\gets [Y_1 = Y_2])}$, $j'={(Y_3\gets Y_1 \land Y_2)}$.
Exploiting the unrealized endogenous values, we can show that exist a function~$\omega$ mapping $i$ to $j$ and a function~$\omega'$ mapping $i$ to $j'$, both satisfying the requirements for low soft abstraction.

The replacement function in intervention~$i$ outputs even and odd numbers and the functions for $j$ and $j'$ output both true and false values, so we know that
\[\softrst(\H_{j'}) = \softrst(\H_{j}) = \{\tau_{\bm{Y}}(\bm{x}) \mid \bm{x} \in \softrst(\L_i)\}.\]
Furthermore, for any low-level exogenous setting $\bm{e}\in\val(\bm{E})$, it holds that
\begin{align*}
    \tau_{\bm{Y}}(\L_i(\bm{e})) &= \M_j(\tau_{\bm{U}}(\bm{e}))\\
    \tau_{\bm{Y}}(\L_i(\bm{e})) &= \M_{j'}(\tau_{\bm{U}}(\bm{e})).
\end{align*}
Consequently, given a function~$\omega$ witnessing that $\H$ low soft $\tau$-abstracts $\L$, we can construct a distinct map~${\omega'}$ that differs on input $i$ and also witnesses the same abstraction relationship.
\end{example}

\subsection{High-Level and Low-Level Soft Interventions}\label{subsec:softabstraction}

Intuitively, ambiguity in low soft abstraction arises whenever there exist two distinct soft interventions on a variable whose parents have non-surjective structural equations. Therefore, two distinct functions could be assigned to the same variable if they agree at least on the image of the structural equations of their parents. We address this problem by requiring consistency not only for all exogenous settings, but also for every endogenous settings.
\begin{definition}[Soft Abstraction]\label{def:softabstraction}
  Let $\L=\Ltuple$ and $\H=\Htuple$ be two SCMs, $\I$ and $\J$ be their respective sets of admissible interventions, $\tau_{\bm{Y}} \colon \val(\bm{X}) \to \val(\bm{Y})$ be a surjective function between endogenous settings, and $\tau_{\bm{U}} \colon \val(\bm{E}) \to \val(\bm{U})$ be a surjective function between exogenous settings.
  The model $\H$ is a soft $\tau$-abstraction of $\L$ whenever it exists a surjective function ${\omega\colon\I\to\J}$ such that, for any intervention~$i \in\I$ it holds
  \begin{equation}\label{eq:softomega}
      \softrst(\H_{\omega(i)}) = \{\tau_{\bm{Y}}(\bm{x}) \mid \bm{x} \in \softrst(\L_i) \}
  \end{equation}
  and for every lower-level exogenous~$\bm{e}\in\val(\bm{E})$ and endogenous~$\bm{x}\in\val(\bm{X})$ setting, it holds
  \begin{equation}\label{eq:softcompatibility}
    \tau_{\bm{Y}}(\bm{F}^i(\bm{x}, \bm{e})) = 
    \bm{G}^{\omega(i)}(\tau_{\bm{Y}}(\bm{x}), \tau_{\bm{U}}(\bm{e})).
   \end{equation}
\end{definition}

Given this definition, the intervention mapping $\omega$ is unique for every combination of hard and soft interventions. Notably, our definition specializes to low soft $\tau$-abstraction when all high-level interventions are hard. Similarly, soft $\tau$-abstraction implies $\tau$-abstraction, and vice versa, when all admitted interventions are hard interventions. We report in Appendix~\ref{app:softabstraction} the proofs on uniqueness, fixed point, and order-preserving properties of the intervention mapping $\omega$ in soft abstraction. Further, we continue previous Example~\ref{ex:ambiguosomega} to highlight how soft abstraction disambiguates high-level soft interventions.
\begin{example}\label{ex:disambiguosomega}
Recall the previous Example~\ref{ex:ambiguosomega}, where there were two intervention maps $\omega$ and $\omega'$ that, given the two SCMs in Figure~\ref{fig:ambiguousomega}, witnessed $\H$ being an abstraction of $\L$.
Given the exogenous setting $\bm{e} = (1,1)$ \textit{and} the endogenous setting $\bm{x} = (1, 1, 1)$, it is immediate that
\begin{equation}
\begin{split}
    \tau_{\bm{Y}}(\bm{F}^i(\bm{x}, \bm{e})) &= [\mathtt{F}, \mathtt{T}, \mathtt{T}] \\ 
    \bm{G}^{\bm{j}}(\tau_{\bm{Y}}(\bm{x}, \tau_{\bm{U}}(\bm{e})))
    &= [\mathtt{F}, \mathtt{T}, \mathtt{T}]\\
    \bm{G}^{\bm{j'}}(\tau_{\bm{Y}}(\bm{x}, \tau_{\bm{U}}(\bm{e})))
    &= [\mathtt{F}, \mathtt{T}, \mathtt{F}].
\end{split}
\end{equation}
Therefore, mapping the intervention $i$ to $j'$ would break consistency on the provided setting.
Thus, we no longer have an ambiguous intervention map~$\omega$.
\end{example}

\section{Constructive Soft Abstraction}\label{sec:constructivesoftabstraction}

Our definition of Soft Abstraction relates two distinct SCMs at different levels of detail according to their response to possibly soft interventions. In this way, we can determine abstraction whenever an intervention at the lower level produces a non-constant effect at the higher level, vice versa, or both. One critical result is the uniqueness of the function mapping interventions between SCMs. Therefore, whenever a model abstracts another, each lower-level manipulation corresponds to a unique abstract intervention. In this section, we build on the uniqueness of the intervention mapping to derive its closed and explicit form.

To this end, we require grouping endogenous and exogenous variables on the lower-level model. Therefore, we consider the existence of an alignment function mapping high-level variables into subsets of low-level variables.
\begin{definition}[Alignment]
    Given two SCMs $\H=\Htuple$ and $\L=\Ltuple$
    such that $\H$ soft $\tau$-abstracts $\L$,
    we define an alignment to be a map $\Pi \colon {\bm{Y}\cup\bm{U}}\to \mathcal{P}(\bm{X}\cup\bm{E})$
    that satisfies the following:
    (1)
    There exist a set of functions
    ${\{\tau_Y\}}_{Y \in \bm{Y}}$
    such that,
    for each endogenous high-level variable~${Y\in\bm{Y}}$
    and low-level endogenous setting~${\bm{x}\in\val(\bm{X})}$,
    it holds
    $\proj(\tau_{\bm{Y}}(\bm{x}), Y ) = \tau_Y(\proj(\bm{x}, \Pi(Y)))$,
    where $\Pi(Y)\subseteq\bm{X}$
    and ${\tau_Y\colon\val(\Pi(Y))\to\val(Y)}$;
    (2)
    There exist a set of functions
    ${\{\tau_U\}}_{U \in \bm{U}}$
    such that,
    for each exogenous high-level variable ${U\in\bm{U}}$
    and low-level exogenous setting~${\bm{e}\in\val(\bm{E})}$
    it holds
    $\proj(\tau_{\bm{U}}(\bm{e}), U ) = \tau_U(\proj(\bm{e}, \Pi(U)))$,
    where $\Pi(U)\subseteq\bm{E}$
    and ${\tau_U\colon\val(\Pi(U))\to\val(U)}$.
\end{definition}

Essentially, we require the abstraction function $\tau_Y$ of an high-level endogenous variable $Y$ to depend only on a subset of low-level endogenous variables $\Pi(Y)$. Similarly, we require each high-level exogenous variable to depend only on a subset of low-level exogenous variables. Assuming the existence of a map from high-level to low-level variables is coherent with previous definitions of constructive abstraction~\citep{beckers2019abstracting} and model transformation~\citep{rischel2021compositional} for hard-intervened causal models. In particular, constructive abstraction further specializes causal abstraction to address the problem of clustering micro-variables into distinct causal variables at the higher-level. Given our definition of Soft Abstraction, a similar extension is immediate in our framework. Therefore, we define Constructive Soft Abstraction whenever the abstraction partitions lower-level variables.
\begin{definition}[Constructive Soft Abstraction]
    An SCM~$\H=\Htuple$ is a constructive soft $\tau$-abstraction of another SCM~$\L=\Ltuple$ whenever $\H$ $\tau$-abstracts $\L$ and there exists an alignment $\Pi$ partitioning a subset of low-level variables.
\end{definition}

Since constructive soft abstraction extends soft abstraction, the function~$\omega$ between interventions is unique. Therefore, we seek an explicit form of $\omega$ returning, for an intervention $(\Pi(Y)\gets\bm{f})$ on the partition of an high-level variable $Y$, the corresponding higher-level intervention $({Y\gets{g}})$.
With the following theorem, we prove that the target function $g_Y$ necessarily consists of the composition of three operations: (i) mapping the inputs of the function on the partitions of the parents $\Pi(\pa(Y))$; (ii) computing the value of the partition $\Pi(Y)$ on the intervened model $\L_i$; and (iii) finally mapping back the partition to the higher-level variable $Y$ (Figure~\ref{fig:explicitomega}).
\begin{theorem}[Explicit Intervention Map]\label{theo:explicitomega}
  Let $\L=\Ltuple$ and $\H=\Htuple$ be two SCMs and $\I$ and $\J$ their corresponding sets of admissible interventions.
  Whenever $\H$ soft {$\tau$-abstracts} $\L$ with alignment $\Pi$,
  $\omega$ maps the low-level intervention $i ={(\bm{V} \leftarrow \bm{f})}$
  to an intervention $j ={(\bm{W} \leftarrow \bm{g})}$
  on the high-level variables 
  $\bm{W} \subseteq \{ Y \mid \bm{V} \cap \Pi(Y) \neq \emptyset \}$
  such that
  \begin{equation}\label{eq:explicitomega}
      g_Y(\bm{y}, \bm{u}) = 
      \tau_Y(F^i_{\Pi(Y)}(
      \tau^{-1}_{\bm{Y}}(\bm{y}),
      \tau^{-1}_{\bm{U}}(\bm{u}))),
  \end{equation}
  for any variable $Y\in\bm{W}$ and any partial inverse $\tau^{-1}$.
\end{theorem}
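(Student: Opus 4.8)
The plan is to derive Equation~\eqref{eq:explicitomega} directly from the consistency requirement~\eqref{eq:softcompatibility} in the definition of soft abstraction, specialized to a constructive alignment. First I would fix the low-level intervention $i = (\bm{V} \gets \bm{f})$ and let $j = \omega(i) = (\bm{W} \gets \bm{g})$ be the unique high-level intervention guaranteed by Definition~\ref{def:softabstraction}. The structural equations of the intervened high-level model satisfy, for each $Y \in \bm{W}$, $G^j_Y = g_Y$, while for $Y \notin \bm{W}$, $G^j_Y = G_Y$. Writing out~\eqref{eq:softcompatibility} coordinatewise and projecting onto a single high-level variable $Y$ gives
\[
  \proj(\tau_{\bm{Y}}(\bm{F}^i(\bm{x},\bm{e})),Y) = \proj(\bm{G}^{\omega(i)}(\tau_{\bm{Y}}(\bm{x}),\tau_{\bm{U}}(\bm{e})),Y) = G^j_Y(\tau_{\bm{Y}}(\bm{x}),\tau_{\bm{U}}(\bm{e})).
\]
Now I would use the alignment. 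Since $\H$ is a constructive soft $\tau$-abstraction, property~(1) of the alignment lets me rewrite the left-hand side as $\tau_Y(\proj(\bm{F}^i(\bm{x},\bm{e}),\Pi(Y))) = \tau_Y(F^i_{\Pi(Y)}(\bm{x},\bm{e}))$, where $F^i_{\Pi(Y)}$ denotes the tuple of intervened structural equations indexed by the low-level variables in $\Pi(Y)$. This yields the identity $\tau_Y(F^i_{\Pi(Y)}(\bm{x},\bm{e})) = G^j_Y(\tau_{\bm{Y}}(\bm{x}),\tau_{\bm{U}}(\bm{e}))$ for \emph{all} low-level settings $\bm{x},\bm{e}$ — this is where the strengthening from exogenous-only to all-endogenous consistency is essential, since it is exactly what pins $G^j_Y$ down on all of $\val(\bm{Y})\times\val(\bm{U})$ rather than only on the image of the structural equations.

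The next step is to invert. Given $\bm{y}\in\val(\bm{Y})$ and $\bm{u}\in\val(\bm{U})$, choose $\bm{x}\in\tau_{\bm{Y}}^{-1}(\bm{y})$ and $\bm{e}\in\tau_{\bm{U}}^{-1}(\bm{u})$ — these exist and are well-defined as sets precisely because $\tau_{\bm{Y}}$ and $\tau_{\bm{U}}$ are surjective, so any partial inverse $\tau^{-1}$ picks out at least one preimage. Substituting gives $G^j_Y(\bm{y},\bm{u}) = \tau_Y(F^i_{\Pi(Y)}(\tau_{\bm{Y}}^{-1}(\bm{y}),\tau_{\bm{U}}^{-1}(\bm{u})))$, which is exactly~\eqref{eq:explicitomega} once we recall $G^j_Y = g_Y$ for $Y\in\bm{W}$. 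To complete the theorem I would also verify the claimed constraint $\bm{W}\subseteq\{Y \mid \bm{V}\cap\Pi(Y)\neq\emptyset\}$: if $\bm{V}\cap\Pi(Y)=\emptyset$ then for every low-level variable in $\Pi(Y)$ the intervened equation coincides with the original one, so $F^i_{\Pi(Y)} = F_{\Pi(Y)}$ on the relevant coordinates; combined with the fact that $\H$ soft $\tau$-abstracts $\L$ with $\omega(\emptyset)=\emptyset$ (the fixed-point property cited after Definition~\ref{def:softabstraction}), the formula then forces $g_Y = G_Y$, i.e. $Y$ need not be in the intervention set, so we may take $\bm{W}$ minimal.

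There are two places I expect friction. The first, and main, obstacle is a well-definedness subtlety: Equation~\eqref{eq:explicitomega} appears to depend on the choice of partial inverse $\tau^{-1}$, and the theorem asserts it holds "for any partial inverse." I would argue this is not actually an additional claim to be proved but a consequence of the all-settings consistency identity derived above — the right-hand side $\tau_Y(F^i_{\Pi(Y)}(\tau_{\bm{Y}}^{-1}(\bm{y}),\tau_{\bm{U}}^{-1}(\bm{u})))$ equals $G^j_Y(\bm{y},\bm{u})$ regardless of which preimages are chosen, hence is independent of the inverse; I would state this explicitly as the key corollary of~\eqref{eq:softcompatibility} before doing the substitution, so the reader sees that the invariance is built in. The second, more routine, point is bookkeeping with projections: I need $\proj(\bm{F}^i(\bm{x},\bm{e}),\Pi(Y))$ to genuinely factor through $\tau_Y$ as in alignment property~(1), which requires noting that $\Pi(Y)\subseteq\bm{X}$ and that the coordinates of $\bm{F}^i$ indexed by $\Pi(Y)$ depend (after the intervention, by our standing assumption that soft interventions add no new parents) only on low-level variables — so the composition is type-correct. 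Neither of these should require more than a couple of careful lines.
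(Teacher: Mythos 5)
Your proposal is correct and follows essentially the same route as the paper's own proof: both derive the identity $\tau_Y(F^i_{\Pi(Y)}(\bm{x},\bm{e})) = G^{\omega(i)}_Y(\tau_{\bm{Y}}(\bm{x}),\tau_{\bm{U}}(\bm{e}))$ from the strengthened consistency requirement via the alignment, establish independence of the choice of preimage because the right-hand side is pinned to $G^{\omega(i)}_Y(\bm{y},\bm{u})$, invert using surjectivity of $\tau$, and conclude the constraint on $\bm{W}$ by observing that $F^i_{\Pi(Y)}=F_{\Pi(Y)}$ when $\bm{V}\cap\Pi(Y)=\emptyset$. The well-definedness subtlety you flag is handled in the paper exactly as you propose, so no changes are needed.
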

\begin{proof}
We report the proof in Appendix~\ref{app:explinttrans}.
\end{proof}

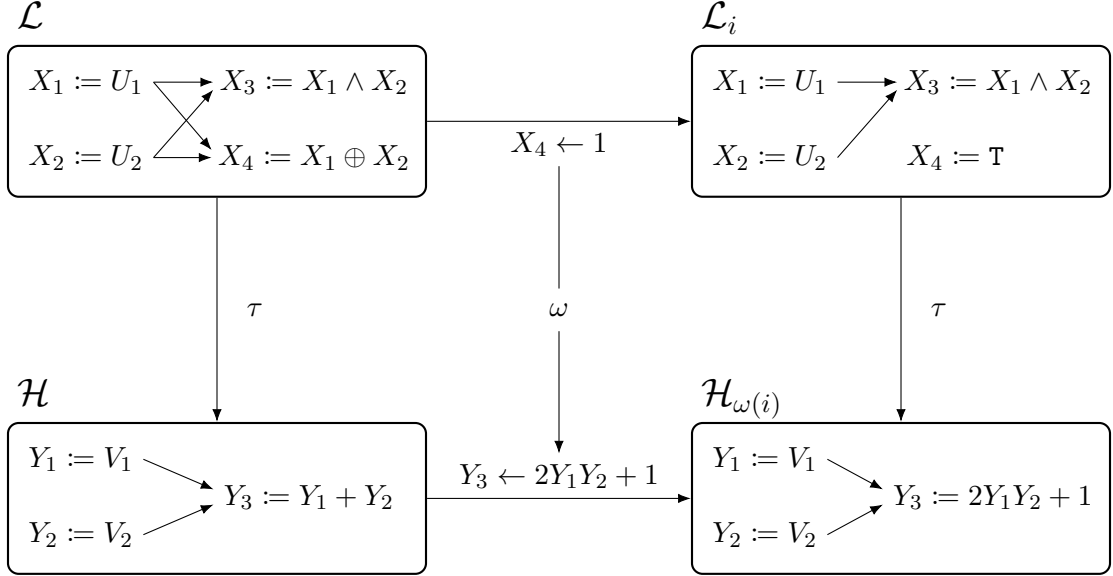
\begin{figure}
    \centering
\begin{tikzpicture}
{
  \normalsize

  \node[draw,rectangle,rounded corners,thick,anchor=north west,minimum width=5.5cm,minimum height=2cm] at (0cm,0cm) (LBOX) {};
  \node[draw=none, anchor=north west](L) at (0cm, 0.7cm) {\Large $\L$};
  \node[draw=none, anchor=north west](X1) at (0.15cm, -0.2cm) {$X_1 \coloneqq U_1$};
  \node[draw=none, below of=X1](X2) {$X_2 \coloneqq U_2$};
  \node[draw=none, right of=X1, xshift=2cm](X3) {$X_3 \coloneqq X_1 \land X_2$};
  \node[draw=none, below of=X3](X4) {$X_4 \coloneqq X_1 \oplus X_2$};
  \draw[-{Latex}] (X1.east)--([xshift=0.05cm]X3.west);
  \draw[-{Latex}] (X1.east)--([xshift=0.05cm,yshift=0.1cm]X4.west);
  \draw[-{Latex}] (X2.east)--([xshift=0.05cm,yshift=-0.1cm]X3.west);
  \draw[-{Latex}] (X2.east)--([xshift=0.05cm]X4.west);

  \node[draw,rectangle,rounded corners,thick,anchor=north west,minimum width=5.5cm,minimum height=2cm] at (0cm,-5cm) (HBOX) {};
  \node[draw=none, anchor=north west](H) at (0cm, -4.30cm) {\Large $\H$};
  \node[draw=none, anchor=north west](Y1) at (0.15cm, -5.2cm) {$Y_1 \coloneqq V_1$};
  \node[draw=none, below of=Y1](Y2) {$Y_2 \coloneqq V_2$};
  \node[draw=none, right of=Y1, xshift=2cm, yshift=-0.5cm](Y3) {$Y_3 \coloneqq Y_1 + Y_2$};
  \draw[-{Latex}] (Y1.east)--([yshift=0.1cm]Y3.west);
  \draw[-{Latex}] (Y2.east)--([yshift=-0.1cm]Y3.west);

  \node[draw,rectangle,rounded corners,thick,anchor=north west,minimum width=5.5cm,minimum height=2cm] at (9cm,0cm) (LiBOX) {};
  \node[draw=none, anchor=north west](Li) at (9cm, 0.7cm) {\Large $\L_i$};
  \node[draw=none, anchor=north west](Xi1) at (9.15cm, -0.2cm) {$X_1 \coloneqq U_1$};
  \node[draw=none, below of=Xi1](Xi2) {$X_2 \coloneqq U_2$};
  \node[draw=none, right of=Xi1, xshift=2cm](Xi3) {$X_3 \coloneqq X_1 \land X_2$};
  \node[draw=none, right of=Xi2, xshift=1.44cm](Xi4) {$X_4 \coloneqq \mathtt{T}$};
  \draw[-{Latex}] (Xi1.east)--([xshift=0.05cm]Xi3.west);
  \draw[-{Latex}] (Xi2.east)--([xshift=0.05cm,yshift=-0.1cm]Xi3.west);

  \node[draw,rectangle,rounded corners,thick,anchor=north west,minimum width=5.5cm,minimum height=2cm] at (9cm,-5cm) (HiBOX) {};
  \node[draw=none, anchor=north west](Hi) at (9cm, -4.30cm) {\Large $\H_{\omega(i)}$};
  \node[draw=none, anchor=north west](Y1) at (9.15cm, -5.2cm) {$Y_1 \coloneqq V_1$};
  \node[draw=none, below of=Y1](Y2) {$Y_2 \coloneqq V_2$};
  \node[draw=none, right of=Y1, xshift=2cm, yshift=-0.5cm](Y3) {$Y_3 \coloneqq 2Y_1Y_2 + 1$};
  \draw[-{Latex}] (Y1.east)--([yshift=0.1cm]Y3.west);
  \draw[-{Latex}] (Y2.east)--([yshift=-0.1cm]Y3.west);

  \draw[-{Latex}] (LBOX)--(LiBOX);
  \node[draw=none, yshift=-0.3cm](INT) at ($(LBOX)!0.5!(LiBOX)$){$X_4 \gets 1$};

  \draw[-{Latex}] (HBOX)--(HiBOX);
  \node[draw=none, yshift=0.3cm](OMEGAINT) at ($(HBOX)!0.5!(HiBOX)$){$Y_3 \gets 2Y_1Y_2+1$};

  \draw[-{Latex}] (LBOX)--(HBOX);
  \node[draw=none, xshift=0.5cm](TAUOBS) at ($(LBOX)!0.5!(HBOX)$){$\tau$};

  \draw[-{Latex}] (LiBOX)--(HiBOX);
  \node[draw=none, xshift=0.5cm](TAUINT) at ($(LiBOX)!0.5!(HiBOX)$){$\tau$};

  \node[draw=none,outer sep=0.05cm, inner sep=0.15cm](EXPLOMEGA) at ($(INT)!0.5!(OMEGAINT)$) {$\omega$};
  \draw[-] (INT)--(EXPLOMEGA);
  \draw[-{Latex}] (EXPLOMEGA)--(OMEGAINT);
}
\end{tikzpicture}
\caption{
The SCM~$\L$ models a binary adder of two exogenous bits $U_1, U_2$, where $\oplus$ is the Boolean XOR operator. The model $\H$ adds two integers and stores the sum in a single integer $Y_3$. Performing an hard intervention on $\L$, for instance by fixing the least significant bit $X_4 \gets \mathtt{T}$, results in a soft intervention at the higher-level $Y_3 \gets 2Y_1Y_2+1$.
}\label{fig:intro}
\end{figure}
Remarkably, Theorem~\ref{theo:explicitomega} does not require the alignment $\Pi$ to partition low-level variables. Consequently, it applies to constructive soft abstraction as a special case. Furthermore, whenever both sets of admissible interventions are hard, the explicit form is a necessary condition for constructive $\tau$-abstraction. In the following example, we showcase how the explicit definition of the intervention map~$\omega$ can be applied to determine high-level interventions.
\begin{example}
Let $\L=\Ltuple$ and $\H=\Htuple$ be two SCMs with structural equations as reported in Figure~\ref{fig:intro}. At the low-level, every endogenous and exogenous variable $X\in{\bm{X}\cup\bm{E}}$ has Boolean domain $\val(X)= \{\mathtt{F},\mathtt{T}\}$. At the high level, $\val(Y_3) = \{0, 1, 2, 3\}$, while every other exogenous or endogenous variable $Y \in \bm{Y} \cup \bm{U} \setminus \{Y_3\}$ has domain $\val(Y) = \{0,1\}$. We consider the alignment $\Pi$ where $\Pi(Y_3) = \{X_3, X_4\}$, while $\Pi(Y_n) = \{X_n\}$ for every other high-level exogenous or endogenous variable $Y_n$. Let $\varphi$ be a function that maps true values to the integer $1$, and false values to $0$. We define the abstraction as $\tau_{Y_3}(\bm{x}) = 2\varphi(x_3) + \varphi(x_4)$, $\tau_{Y_1}(\bm{x}) = \varphi(x_1)$, and $\tau_{Y_2}(\bm{x}) = \varphi(x_2)$. Given the low-level intervention $i={(X_4 \gets \mathtt{T})}$, the corresponding high-level intervention is $\omega(i) = (Y_3\gets g)$, where
\begin{equation}
    \begin{split}
        g(\bm{y}, \bm{u})
        &= \tau_{Y_3}(F^i_{\Pi(Y_3)}(\tau^{-1}_{\bm{Y}}(\bm{y}),\tau^{-1}_{\bm{U}}(\bm{u})))\\
        &= \tau_{Y_3}(F^i_{\{X_3,X_4\}}((\varphi^{-1}(y_1), \varphi^{-1}(y_2), x_3, x_4),\bm{e}))\\
        &= \tau_{Y_3}(\varphi^{-1}(y_1) \land \varphi^{-1}(y_2), \mathtt{T})\\
        &= 2\varphi(\varphi^{-1}(y_1) \land \varphi^{-1}(y_2)) + \varphi(\mathtt{T})\\
        &= 2 y_1 y_2 + 1
    \end{split}
\end{equation}
Since $g$ depends only on its parents, the explicit form works for any choice of the remaining endogenous $\{X_3, X_4\}$ or exogenous $\bm{E}$ lower-level variables.
\end{example}

\section{Conclusion}

Hard interventions constitute one of the core concepts of causality. Existing theories of abstraction build on the assumption that a causal model abstracts another whenever there is a correspondence between their sets of admissible hard interventions. We generalize this theory to soft interventions to support non-constant change at different levels of abstraction. To this end, we prove how generalizing constructs based on hard interventions in the definition of $\tau$-abstraction from \citet{beckers2019abstracting} does not suffice. Therefore, we propose to enforce causal consistency for all exogenous \textit{and} endogenous settings. Together with our generalization effort, this strengthened requirement results in our novel definition of soft abstraction. As proved and exemplified, soft abstraction induces a unique function~$\omega$ mapping low-level interventions to high-level interventions in the soft-intervened scenario. Finally, we report that our constructive definition of soft abstraction implies a unique explicit form for the intervention map~$\omega$. Since soft abstraction correctly reduces to $\tau$-abstraction in the hard-intervened scenario, this result effectively extends previous formulations that exclusively proved the uniqueness of~$\omega$. By demonstrating the explicit form as a necessary condition, our contribution opens up practical applications for assessing abstraction without requiring an evaluation of each exogenous or endogenous configuration.

\bibliography{softint}

\clearpage
\appendix
\section{Proofs}\label{app:proofs}

\subsection{Soft Generalization of Hard Constructs}\label{app:generalization}

Given an SCM~$\M$, for any hard intervention $i$ on the model, it holds that
\begin{enumerate}
    \item $\softrst(M_i) = \rst(M_i)$,
\begin{proof}
Given a subset ${\bm{V}\subseteq\bm{X}}$ of endogenous variables, a hard intervention $i=(\bm{V} \gets \bm{v})$ constraints each intervened variable $X$ to a constant value $v$.
Equivalently, we can consider the same intervention as a function
$\bm{f} \colon \val(\bm{X}) \times \val(\bm{E}) \to \val(\bm{V})$ whose output is the constant $\bm{v}$. Therefore, $\img(\bm{f}) = \{v\}$ and, consequently,
\begin{equation}
\begin{split}
    \softrst(\M_i) &= \{{\bm{x}\in\val(\bm{X})} \mid \proj(\bm{x},V) \in \img(\bm{f})\}\\
    &= \{{\bm{x}\in\val(\bm{X})} \mid \proj(\bm{x},V) = \bm{v}\}\\
    &= \rst(\M_i).
\end{split}
\end{equation}
\end{proof}
    \item For any hard intervention $i'$, $i \sqsubseteq i' \iff i \preceq i'$.
\begin{proof}
Given the previous property, we prove that soft intervention ordering implies hard intervention ordering for any pair of hard interventions $i = (\bm{V} \gets \bm{v})$ and $i' = (\bm{W} \gets \bm{w})$. In fact,
\begin{equation}
    \begin{split}
        i \preceq i'
        &\iff \softrst(\M_i) \supseteq \softrst(\M_{i'}) \\
        &\iff \rst(\M_i) \supseteq \rst(\M_{i'}),\\
    \end{split}
\end{equation}
whenever $i,i'$ are hard interventions.
If the restriction of $\M_i$ contains that of $\M_{i'}$, then $\M_i$ allows more values for some variables. Equivalently, $i'$ intervenes on a larger set of variables than $i$ and sets them to the same constants, otherwise they would not have been in the restriction of $\M_i$. Consequently,
\begin{equation}
    \begin{split}
            \rst(\M_i) \supseteq \rst(\M_{i'})
            &\iff \bm{V}\subseteq\bm{W} \land \bm{v} = \proj(\bm{w}, \bm{V})\\
            &\iff i \sqsubseteq i'.
    \end{split}
\end{equation}
\end{proof}
\end{enumerate}

\subsection{Low Soft Abstraction}\label{app:lowersoftabstraction}

\textbf{Definition~\ref{def:lowsoftabstraction}}:
Let $\L=\Ltuple$ and $\H=\Htuple$ be two SCMs, $\I$ and $\J$ be their respective sets of admissible interventions, $\tau_{\bm{Y}} \colon \val(\bm{X}) \to \val(\bm{Y})$ be a surjective function between endogenous settings, and $\tau_{\bm{U}} \colon \val(\bm{E}) \to \val(\bm{U})$ be a surjective function between exogenous settings.
The model $\H$ is a low soft $\tau$-abstraction of $\L$ whenever it exists a surjective function ${\omega\colon\I\to\J}$ such that, for any intervention~$i \in\I$ it holds
\begin{equation}
  \softrst(\H_{\omega(i)}) = \{\tau_{\bm{Y}}(\bm{x}) \mid \bm{x} \in \softrst(\L_i) \}
\end{equation}
and for every lower-level exogenous setting~$\bm{e}\in\val(\bm{E})$, it holds
\begin{equation}
  \tau_{\bm{Y}}(\L_i(\bm{e})) = \H_{\omega(i)}(\tau_{\bm{U}}(\bm{e})).
\end{equation}

Given Definition~\ref{def:lowsoftabstraction} of Low Soft Abstraction, we prove that, whenever the set of admissible high-level intervention $\J$ contains only hard interventions and $\H$ low soft $\tau$-abstracts $\L$, it holds that
\begin{enumerate}
  \item $\omega$ is unique,
    \begin{proof}
    For any distinct surjective function $\omega'\neq\omega$, $\H$ does not $\tau$-abstracts $\L$. Since $\omega, \omega'$ are defined on the domain $\I$, they are distinct whenever there exist at least one intervention $i\in\I$ such that \[\omega(i)\neq\omega(i').\]
    Thus, there must exist two distinct high-level interventions $j, j' \in\H$ such that,
    \begin{align}
        \softrst(\H_j) &= \{\tau_{\bm{Y}}(\bm{x}) \mid \bm{x} \in \softrst(\L_i)\}\\
        \softrst(\H_{j'}) &= \{\tau_{\bm{Y}}(\bm{x}) \mid \bm{x} \in \softrst(\L_i)\}
    \end{align}
    Consequently, it must hold that
    \begin{equation}
      \softrst(\H_j) = \softrst(\H_{j'}).
    \end{equation}
    Further, since $j, j'$ are hard interventions,
    \begin{equation}
      \rst(\H_j) = \rst(\H_{j'}).
    \end{equation}
    In other terms, the high-level interventions must be distinct but their restrictions must be equal. When $j, j'$ are hard interventions this is not possible, thus $\omega$ is unique.
    \end{proof}
  \item $\omega(\emp) = \emp$,
    \begin{proof}
      Given an empty intervention~$\empty$, its soft restriction on the lower-level model contains all possible endogenous values. Formally,
      \begin{equation}
          \softrst(\M_\emp) = \val(\bm{X}).
      \end{equation}
      Since $\tau$ is surjective, it also holds that
      \begin{equation}
        \begin{split}
          \softrst(\H_j) &= \{\tau_{\bm{Y}}(\bm{x}) \mid \bm{x} \in \softrst(\L_\emp)\}\\
          &= \val(\bm{Y}).
        \end{split}
      \end{equation}
      In the same way, the empty restriction $\rst(\H_\emp)$ on the high-level model coincides with its domain. So, $\softrst(\H_\emp) = \val(\bm{Y})$ implies
      \begin{equation}
          \softrst(\H_\emp) = \{\tau_{\bm{Y}}(\bm{x}) \mid \bm{x} \in \softrst(\L_\emp)\}
      \end{equation}
      Therefore, by definition of low soft causal abstraction, it holds that $\omega(\emp) = \emp$.
    \end{proof}
  \item $i_1 \preceq i_2 \implies \omega(i_1) \preceq \omega(i_2)$,
\begin{proof}
  By ordering interventions according to their soft restriction, it follows that
  \begin{equation}
  \begin{split}
    i_1 \preceq i_2 &\implies \softrst(\L_{i_1}) \supseteq \softrst(\L_{i_2})\\
    &\implies \{\tau_{\bm{Y}}(\bm{x}) \mid \bm{x} \in \softrst(\L_{i_1})\} \supseteq \{\tau_{\bm{Y}}(\bm{x}) \mid \bm{x} \in \softrst(\L_{i_1})\}\\
    &\implies \softrst(\H_{\omega(i_1)}) \supseteq \softrst(\L_{\omega(i_2)})\\
    &\implies\omega(i_1) \preceq \omega(i_2).
  \end{split}
  \end{equation}
  \end{proof}
  \item There exists a subset of admissible hard interventions $\tilde{\I}$ such that $\H$ $\tau$-abstracts $\L$ given a subset of high-level interventions $\tilde{\J}$.
  \begin{proof}
    Let $\tilde{\I}$ be the subset of hard interventions in $\I$ and $\tilde{\J} = \{\omega(i)\mid i\in\tilde{\I}\}$. Then, since $\H$ low soft $\tau$-abstracts $\L$ for each intervention $i\in\I$ with there exists a partial function $\omega'\colon\tilde{\I}\to\tilde{\J}$ such that
    for any intervention~$i\in\tilde{I}$ it holds
    \begin{equation}
      \softrst(\H_{\omega(i)}) = \{\tau_{\bm{Y}}(\bm{x}) \mid \bm{x} \in \softrst(\L_i) \}
    \end{equation}
    and for every lower-level exogenous setting~$\bm{e}\in\val(\bm{E})$, it holds
    \begin{equation}
      \tau_{\bm{Y}}(\L_i(\bm{e})) = \H_{\omega(i)}(\tau_{\bm{U}}(\bm{e})).
    \end{equation}
    Thus, $\H$ $\tau$-abstracts $\L$ given the interventions sets $\tilde{\I}, \tilde{\J}$.
  \end{proof}
\end{enumerate}

\subsection{Ambiguity in Lower Soft Abstraction}\label{app:ambiguity}

\textbf{Theorem~\ref{theo:ambiguity}}:
Let $\H=\Htuple$ be an SCM with admissible interventions $\J$.
Given a variable ${V\in\bm{Y}}$, if $\J$ contains two distinct interventions $j ={(V \gets g)}$ and $j' ={(V \gets g')}$ s.t.
\begin{enumerate}
\item $\forall{\bm{u}\in\val(\bm{U})} \colon \proj(\H_j(\bm{u}), V) =
\proj(\H_{j'}(\bm{u}), V)$,
\item $\img(g) = \img(g')$,
\end{enumerate}
then, for any causal model~$\L$ with admissible interventions $\I$, whenever $\H$ low soft $\tau$-abstracts $\L$, the corresponding function $\omega\colon\I\to\J$ is not uniquely defined. 
\begin{proof}
    Since $\H$ lower soft $\tau$-abstracts $\L$, the function $\omega$ is surjective.
    Therefore, since $j_1, j_2\in\J$, there must exist two lower-level intervention $i_1, i_2$ such that
    $\omega(i_1) = j_1$ and $\omega(i_2) = j_2$.
    Therefore, it holds that
    \begin{align}
        \softrst(\M_{j_1}) &= \{\tau_{\bm{Y}}(\bm{x}) \mid \bm{x} \in \softrst(\L_{i_1})\}\\
        \softrst(M_{j_2}) &= \{\tau_{\bm{Y}}(\bm{x}) \mid \bm{x} \in \softrst(\L_{i_2})\}\\
        \tau_{\bm{Y}}(\L_{i_1}(\bm{e})) &= \H_{j_1}(\tau_{\bm{U}}(\bm{e}))\\
        \tau_{\bm{Y}}(\L_{i_2}(\bm{e})) &= \H_{j_2}(\tau_{\bm{U}}(\bm{e})).
    \end{align}
    Since the images $\img(g) = \img(g')$ of the two high-level function coincide, it also holds that
    \begin{equation}
        \softrst(\M_{j_1}) = \softrst(M_{j_2}),
    \end{equation}
    Furthermore, since $g, g'$ output the same values given every possible configuration of their parents, for every low-level exogenous setting $\bm{e}\in\val(\bm{E})$, it holds that
    \begin{equation}
       \bm{G}^{\H_{j_1}}(\tau_{\bm{U}}(\bm{e})) = 
       \bm{G}^{\H_{j_2}}(\tau_{\bm{U}}(\bm{e})).
    \end{equation}
    Consequently, while distinct, the two functions are equivalent given the constraints imposed by the definition of low soft abstraction. Therefore, there exist a function $\omega'$ such that
    \begin{equation}
        \omega'(i) = \begin{cases}
           \omega(i_1) & i = i_2\\ 
           \omega(i_2) & i = i_1\\ 
           \omega(i) & \text{otherwise}
           \end{cases},
    \end{equation}
    and $\H$ still lower soft $\tau$-abstracts $\L$.
    Consequently, $\omega$ is not unique.
\end{proof}

\subsection{Soft Abstraction}\label{app:softabstraction}

\textbf{Definition~\ref{def:softabstraction}}:
Let $\L=\Ltuple$ and $\H=\Htuple$ be two SCMs, $\I$ and $\J$ be their respective sets of admissible interventions, $\tau_{\bm{Y}} \colon \val(\bm{X}) \to \val(\bm{Y})$ be a surjective function between endogenous settings, and $\tau_{\bm{U}} \colon \val(\bm{E}) \to \val(\bm{U})$ be a surjective function between exogenous settings.
The model $\H$ is a soft $\tau$-abstraction of $\L$ whenever it exists a surjective function ${\omega\colon\I\to\J}$ such that, for any intervention~$i \in\I$ it holds
\begin{equation}
    \softrst(\H_{\omega(i)}) = \{\tau_{\bm{Y}}(\bm{x}) \mid \bm{x} \in \softrst(\L_i) \}
\end{equation}
and for every lower-level exogenous~$\bm{e}\in\val(\bm{E})$ and endogenous~$\bm{x}\in\val(\bm{X})$ setting, it holds
\begin{equation}
  \tau_{\bm{Y}}(\bm{F}^i(\bm{x}, \bm{e})) = 
  \bm{G}^{\omega(i)}(\tau_{\bm{Y}}(\bm{x}), \tau_{\bm{U}}(\bm{e})).
 \end{equation}

\noindent Given Definition~\ref{def:softabstraction} of Soft Abstraction, we prove that whenever $\H$ soft $\tau$-abstracts $\L$, it holds that
\begin{enumerate}
  \item $\omega$ is unique.
\begin{proof}
  We wish to show that for any distinct surjective function $\omega' \neq \omega$, $\H$ is not a soft $\tau$-abstraction of $\L$.
  Since $\omega$ and $\omega'$ are defined on $\I$, they are distinct if there exists at least one intervention $i\in\I$ such that $\omega(i) \neq \omega'(i)$.
  Thus, there must be two distinct high-level interventions $j={(\bm{V}\gets\bm{g})}$ and $j'={(\bm{W}\gets\bm{g'})}$ in $\J$ such that $\omega(i)=j$ and $\omega'(i)=j'$.
  Consequently, for every low-level exogenous setting~$\bm{e}\in\val(\bm{E})$ and endogenous setting~$\bm{x}\in\val(\bm{X})$, it must hold
  \begin{align}
    \tau_{\bm{Y}}(\bm{F}^i(\bm{x}, \bm{e})) &= 
    \bm{G}^{j}(\tau_{\bm{Y}}(\bm{x}), \tau_{\bm{U}}(\bm{e}))\\
    \tau_{\bm{Y}}(\bm{F}^i(\bm{x}, \bm{e})) &= 
    \bm{G}^{j'}(\tau_{\bm{Y}}(\bm{x}), \tau_{\bm{U}}(\bm{e})).
  \end{align}
  Therefore, it must also hold
  \begin{equation}
    G^{j}_Y(\tau_{\bm{Y}}(\bm{x}), \tau_{\bm{U}}(\bm{e})) =
    G^{j'}_Y(\tau_{\bm{Y}}(\bm{x}), \tau_{\bm{U}}(\bm{e})).
  \end{equation}
  for any high-level endogenous variable $Y\in\bm{Y}$, including those in $\bm{V}$, $\bm{W}$, or both.
  Consequently, since the output of the intervened functions match for any possible input, the high-level interventions $j, j'$ are not distinct and $\omega$ is unique.
\end{proof}
  \item $\omega(\emp) = \emp$.
  \begin{proof}
    Same proof to Point 2 in Appendix~\ref{app:lowersoftabstraction}.
  \end{proof}
  \item $i_1 \preceq i_2 \implies \omega(i_1) \preceq \omega(i_2)$.
  \begin{proof}
    Same proof to Point 3 in Appendix~\ref{app:lowersoftabstraction}.
  \end{proof}
  \item Given the subset of admissible high-level hard interventions $\tilde{\J}$, $\H$ low soft $\tau$-abstracts $\L$.
  \begin{proof}
    Since $\H$ soft $\tau$-abstracts $\L$, and $\tilde{\J}\subseteq\J$, for any $j\in\tilde{\J}$ there
    exists $i\in\I$ such that $\omega(i)=j$. Therefore, we define
    \begin{equation}
        \tilde{\I} = \{i\in\I\mid \omega(i)\in\tilde{\J} \}.
    \end{equation}
    Consequently, for any $i\in\tilde{\I}$ it holds that
    \begin{equation}
        \softrst(\H_{\omega(i)}) = \{\tau_{\bm{Y}}(\bm{x}) \mid \bm{x} \in \softrst(\L_i) \}
    \end{equation}
    and for every lower-level exogenous~$\bm{e}\in\val(\bm{E})$ and endogenous~$\bm{x}\in\val(\bm{X})$ setting, it holds
    \begin{equation}
      \tau_{\bm{Y}}(\bm{F}^i(\bm{x}, \bm{e})) = 
      \bm{G}^{\omega(i)}(\tau_{\bm{Y}}(\bm{x}), \tau_{\bm{U}}(\bm{e})).
    \end{equation}
    Therefore, it also holds
    \begin{equation}
      \tau_{\bm{Y}}(\L^i(\bm{e})) = 
      \H^{\omega(i)}(\tau_{\bm{U}}(\bm{e})).
    \end{equation}
    for any $\bm{e}\in\val(\bm{E})$.
    Thus, $\H$ low soft $\tau$-abstracts $\L$.
  \end{proof}
  \item Given the subset of admissible low and high-level hard interventions $\tilde{\I}, \tilde{\J}$, $\H$ $\tau$-abstracts $\L$.
  \begin{proof}
    The proof is equal to the previous,
    except we define the set of low-level admissible interventions as
    $\tilde{\I} = \{i \in \I \mid \omega(i)\in \J \land i \text{ is ``hard''} \land \omega(i) \text{ is ``hard}\}$,
    and consequently
    $\tilde{\J} = \{\omega(i) \mid i \in \tilde{\I}\}$.
  \end{proof}
\end{enumerate}

\subsection{Explicit Intervention Transformation}\label{app:explinttrans}

\textbf{Theorem~\ref{theo:explicitomega}}:
Let $\L=\Ltuple$ and $\H=\Htuple$ be two SCMs and $\I$ and $\J$ their corresponding sets of soft interventions.
Whenever $\H$ soft {$\tau$-abstracts} $\L$ with alignment $\Pi$,
$\omega$ maps the low-level intervention $i ={(\bm{V} \leftarrow \bm{f})}$
to an intervention $j ={(\bm{W} \leftarrow \bm{g})}$
on the high-level variables 
$\bm{W} \subseteq \{ Y \mid \bm{V} \cap \Pi(Y) \neq \emptyset \}$
such that
\begin{equation}
  g_Y(\bm{y}, \bm{u}) = 
  \tau_Y(F^i_{\Pi(Y)}(
  \tau^{-1}_{\bm{Y}}(\bm{y}),
  \tau^{-1}_{\bm{U}}(\bm{u}))),
\end{equation}
for any variable $Y\in\bm{W}$ and any partial inverse $\tau^{-1}$.

\begin{proof}
Since $\H$ soft $\tau$-abstracts $\L$, for each exogenous setting $\bm{e}\in\val(\bm{E})$, for each endogenous setting $\bm{x}\in\val(\bm{X})$, and each intervention $i\in\I$, it holds
\begin{equation}
    \tau_{\bm{Y}}(F^i(\bm{x}, \bm{e})) =
    G^{\omega(i)}(\tau_{\bm{X}}(\bm{x}), \tau_{\bm{E}}(\bm{e})).
\end{equation}
To simplify the notation, we denote as $\bm{x}\in\val(\bm{X}\cup\bm{E})$ any combination of exogenous and endogenous settings on the lower-level model.
Therefore, without loss of generality, the previous equation is equivalent to
\begin{equation}
    \tau_{\bm{Y}}(F^i(\bm{x})) =
    G^{\omega(i)}(\tau(\bm{x})).
\end{equation}
Given the mapping $\Pi$, the equation is also equivalent to
\begin{equation}
    \tau_Y(F^i_{\Pi(Y)}(\bm{x})) =
    G_Y^{\omega(i)}(\tau(\bm{x})).
\end{equation}
for any $Y\in\bm{Y}$. In the following, we prove that this equation is necessarily satisfied by our explicit definition of $\omega$ for any intervention~$i$ and any exogenous and endogenous configuration~$\bm{x}\in\val(\bm{X}\cup\bm{E})$.

Given any setting $\bm{x}$, applying a partial inverse $\tau^{-1}(\bm{x})$ returns
a possibly different value $\bm{x'}$ such that $\tau(\bm{x}) = \tau(\bm{x'})$.
Therefore, for any $\bm{x'}$ possibly different from $\bm{x}$, it holds that
\begin{equation}
    \begin{split}
    \tau(\bm{x})&=\tau(\bm{x'}) \\
    \implies \proj(\tau(\bm{x}),\pa(Y)) &=\proj(\tau(\bm{x'}),\pa(Y)) \\
    \implies G^{\omega(i)}_Y(\tau(\bm{x})) &=  G^{\omega(i)}_Y(\tau(\bm{x'}))
    \end{split}
\end{equation}
given that the structural equation $G^{\omega(i)}_Y$ depends only on parents $\pa(Y)$.
Consequently, whenever $\H$ soft $\tau$-abstracts $\L$,
\begin{equation}
    \begin{split}
    G^{\omega(i)}_Y(\tau(\bm{x})) &=  G^{\omega(i)}_Y(\tau(\bm{x'})) \\
    \implies \tau_Y(F^i_{\Pi(Y)}(\bm{x})) &= \tau_Y(F^i_{\Pi(Y)}(\bm{x'})).
    \end{split}
\end{equation}
Therefore, the necessary explicit form from Equation~\ref{eq:explicitomega}, results from
\begin{equation}
\begin{split}
    G_Y^{\omega(i)}(\tau(\bm{x})) &= 
    \tau_Y(F^i_{\Pi(Y)}(\bm{x}))\\
    &= \tau_Y(F^i_{\Pi(Y)}(\tau^{-1}(\tau(\bm{x}))))
\end{split}
\end{equation}
for any exogenous and endogenous configuration $\bm{x}\in\val(\bm{X}\cup\bm{E})$.
Since, $\tau$ is surjective,
for any high-level setting 
$\bm{y}\in\val(\bm{Y}\cup\bm{U})$,
there exist a low-level setting
$\bm{x}\in\val(\bm{X}\cup\bm{E})$
such that
$\tau(\bm{x}) = \bm{y}$, therefore
\begin{equation}
    G_Y^{\omega(i)}(\bm{y})
    = \tau_Y(F^i_{\Pi(Y)}(\tau^{-1}(\bm{y}))).
\end{equation}

Finally, we want to prove that intervening on a subset $\bm{V}$ of lower-level variables results \textit{at most} in an high-level intervention on the variables
$\bm{W} \subseteq \{ Y \mid \bm{V} \cap \Pi(Y) \neq \emptyset \}$
whose clusters intersect with $\bm{V}$. In fact, for any high-level variable $Y$ such that $\Pi(Y)\cap\bm{V}=\emptyset$, it holds that
\begin{equation}
\begin{split}
    G_Y^{\omega(i)}(\tau(\bm{x}))
    &=\tau_Y(F^i_{\Pi(Y)}(\bm{x}))\\
    &=\tau_Y(F_{\Pi(Y)}(\bm{x}))\\
    &= G_Y(\tau(\bm{x})).
\end{split}
\end{equation}
Therefore, applying $\omega$ results in the original structural equation $G_Y$ given an intervention $i$ not on in the cluster $\Pi(Y)$. Consequently, the variable $Y$ is not intervened by $\omega(i)$ and $Y\not\in\bm{W}$.
\end{proof}

\end{document}